\documentclass[11pt,a4paper]{article}

\usepackage[utf8]{inputenc}
\usepackage[T1]{fontenc}
\usepackage{amsmath,amssymb,amsthm}
\usepackage{booktabs}
\usepackage{hyperref}
\usepackage{url}
\usepackage{graphicx}
\usepackage{listings}
\usepackage{xcolor}
\usepackage[margin=1in]{geometry}
\usepackage{natbib}
\usepackage{enumitem}
\usepackage{tabularx}
\usepackage{fancyvrb}
\usepackage{tikz}
\usetikzlibrary{shapes.geometric, arrows.meta, positioning, fit, backgrounds}

\hypersetup{
  colorlinks=true,
  linkcolor=blue,
  citecolor=blue,
  urlcolor=blue
}

\newtheorem{definition}{Definition}
\newtheorem{proposition}{Proposition}

\title{ContextNest: Verifiable Context Governance\\for Autonomous AI Agents}

\author{
  Misha Sulpovar\\
  PromptOwl, LLC\\
  \texttt{misha@promptowl.ai}
  \and
  Benn R. Konsynski\\
  Goizueta Business School, Emory University\\
  \texttt{benn.konsynski@emory.edu}\\
  ORCID: \texttt{0009-0008-3884-9460}
  \and
  Qaish Kanchwala\\
  Independent\\
  \texttt{qaish.kanchwala@gmail.com}
  \and
  Gabe Goodhart\\
  IBM Research\\
  \texttt{ghart@us.ibm.com}
}

\date{June 15, 2026}

\begin{document}

\maketitle

\begin{abstract}
Autonomous AI agents increasingly depend on external knowledge stores, yet most retrieval pipelines provide relevance without durable guarantees of provenance, version identity, integrity, traceability, or point-in-time reconstruction. We formalize this problem as \emph{context governance} and present ContextNest, an open specification and reference implementation for governed AI-consumable knowledge vaults. \textbf{ContextNest is not a replacement for Retrieval-Augmented Generation (RAG); it is a governance layer beneath retrieval.} In the natural composition, ContextNest determines which artifacts are approved, current, attributable, and integrity-verified, while RAG and adjacent retrieval systems operate over that governed subset (§\ref{sec:complementary}).

The specification combines typed Markdown documents with structured metadata, a deterministic set-algebraic selector grammar, addressable \texttt{contextnest://} URI references, SHA-256 hash-chained version histories, graph-level checkpoints, source nodes that integrate live data via the Model Context Protocol (MCP), and audit traces of agent context consumption. Together these mechanisms let an organization reconstruct which knowledge versions informed an agent output and whether those versions were eligible for AI use at the time of consumption.

We report first empirical results from two controlled experiments. In a stale-version attack designed to isolate the governance-versus-retrieval failure mode, governed selection strictly Pareto-dominates BM25 sparse retrieval, achieving a higher answer-quality pass rate (97\% vs.\ 93--90\%) at approximately one-third the input-token cost. In a retrieval-determinism experiment over a synthesized $1{,}060$-document corpus, deterministic selectors and BM25 return perfectly stable document sets across repeated identical queries (Jaccard $1.0$), while a dense\,+\,HNSW baseline configured with realistic production parameters is non-deterministic on $80\%$ of queries (mean Jaccard $0.611$, worst-case $0.210$). These results suggest that context governance addresses a class of failure modes that retrieval quality alone is not designed to resolve. We release a reference implementation---a core engine, command-line interface, and MCP server---under open licenses.
\end{abstract}

\noindent\textbf{Keywords:} context governance, knowledge management, AI agents, retrieval-augmented generation, provenance, integrity verification, Model Context Protocol

\section{Introduction}
\label{sec:intro}

Autonomous AI agents are beginning to move beyond text generation into operational roles: answering policy questions, drafting decisions, coordinating workflows, invoking tools, and acting on behalf of organizations. In these settings, the quality of an agent's behavior depends not only on the model, but on the trustworthiness of the knowledge it consumes. Yet the knowledge layer beneath most agent systems remains weakly governed. Documents are embedded, chunked, retrieved, and injected into prompts, but the resulting pipeline often cannot answer basic questions: Which source version informed this output? Who authored and approved it? Was it current at the time of use? Has it been modified since? Could the same context be reconstructed for audit or replay?

\paragraph{A motivating vignette.}
A procurement agent approves a vendor based on a risk-threshold policy retrieved from the organization's knowledge base. Six months later, an auditor flags the approval: the threshold that governed the decision was revised eighteen months ago, but the retrieval pipeline surfaced an archived version that lived alongside the current one in the same vector index. The agent answered confidently; the index did not distinguish the live policy from its predecessor; no audit trail records which version the agent actually consumed. The agent did not fail at \emph{retrieval}---the document it returned was textually relevant. It failed at \emph{governance}: the system had no notion of which version was approved for AI consumption, no integrity check on what was retrieved, and no reconstructible record of the knowledge that informed the action. Variations of this pattern---a policy agent citing an archived travel reimbursement rule, a support agent quoting a deprecated SLA, a code-review agent flagging a violation of a superseded style guide---are the failure mode this paper addresses.

This is the \textbf{context governance gap} (\textbf{CGG}): the difference between giving an AI system access to information and ensuring that the information it consumes is approved, current, attributable, versioned, tamper-evident, and auditable. Retrieval-Augmented Generation~\citep{lewis2020retrieval} has made external knowledge usable by language models, but \emph{retrieval is not governance}. A vector index can help find relevant passages; it does not, as an abstraction, guarantee provenance, version identity, integrity verification, deterministic selection, traceability, or point-in-time reconstruction.

ContextNest addresses this gap by treating context as a first-class governed artifact: a portable specification for structured, versioned, and cryptographically verifiable knowledge vaults designed for AI-agent consumption. \textbf{The specification scopes the governance layer; it is not a replacement for retrieval.} The natural composition is governed selection over the published, integrity-verified subset of the vault, with retrieval systems (RAG and adjacent) operating on that governed substrate; §\ref{sec:complementary} states this complementarity in full. The specification's mechanisms are: typed Markdown documents with structured metadata, deterministic selector queries, stable addressable references, hash-chained version histories, graph-level checkpoints, and audit traces of context injection.

\paragraph{Thesis.}
\textbf{As AI agents become operational actors, context governance---not model capability alone---becomes the binding constraint on trustworthy enterprise AI.}

\paragraph{Intellectual lineage.}
This thesis is the contemporary form of a question the Information Systems discipline has been refining for thirty-five years: as cognitive labor is progressively offloaded from humans to systems, what guarantees must the systems provide for that offload to be trustworthy? The progression runs from the delegation of perception and analysis to a ``community of intelligent agents''~\citep{elofson1991delegation}, to the dynamic apportionment of cognitive load between human and machine \emph{co-cognitors}~\citep{fjeldstad1986apportionment}, to the formal reapportionment of managerial judgment and decision rights~\citep{sviokla1994reapportionment, konsynski2024cognitive}. Each stage of this lineage has identified the same enabling factor: a trustworthy substrate of grounded, attributable, auditable knowledge. ContextNest is the artifact-level mechanism for that substrate at the inference-time knowledge layer, in the era of generative-AI agents.

\paragraph{Contributions.}
This paper makes the following contributions:

\begin{itemize}[nosep]
  \item A \textbf{formal model of context governance}: six properties that distinguish a governed context system from an undifferentiated document store (§\ref{sec:problem}), together with a precise threat model under which the integrity claims hold (§\ref{sec:threat}).
  \item A \textbf{portable specification} for governed AI-consumable knowledge vaults: typed Markdown documents with YAML frontmatter, a hierarchical stewardship layer with explicit separation of duties (§\ref{sec:stewardship}), a deterministic set-algebraic selector grammar (§\ref{sec:selectors}), an addressable URI scheme with version pinning (§\ref{sec:uri}), SHA-256 hash-chained version histories (§\ref{sec:integrity}), graph-level checkpoints (§\ref{sec:checkpoints}), and source nodes for live-data integration (§\ref{sec:sources}).
  \item A \textbf{reference implementation} (§\ref{sec:implementation}): a core engine, a CLI with nineteen commands, and a Model Context Protocol~\citep{anthropic2024mcp} server, openly licensed and reproducibly deployable.
  \item \textbf{First empirical results} (§\ref{sec:first-results}--§\ref{sec:determinism}): a 30-query stale-version attack designed to isolate the governance-vs-retrieval failure mode (governed selection strictly Pareto-dominates BM25 sparse retrieval on this suite, with two distinct BM25 failure modes---stale-version poisoning and similarity-retrieval miss---demonstrated in the same run), and a 50-query retrieval-determinism experiment against a $1{,}060$-document synthesized corpus (selector and BM25 perfectly deterministic on every query; dense+HNSW non-deterministic on $80\%$ of queries, mean Jaccard $0.611$, worst-case $0.210$).
\end{itemize}

\begin{figure}[t]
\centering
\includegraphics[width=0.92\linewidth]{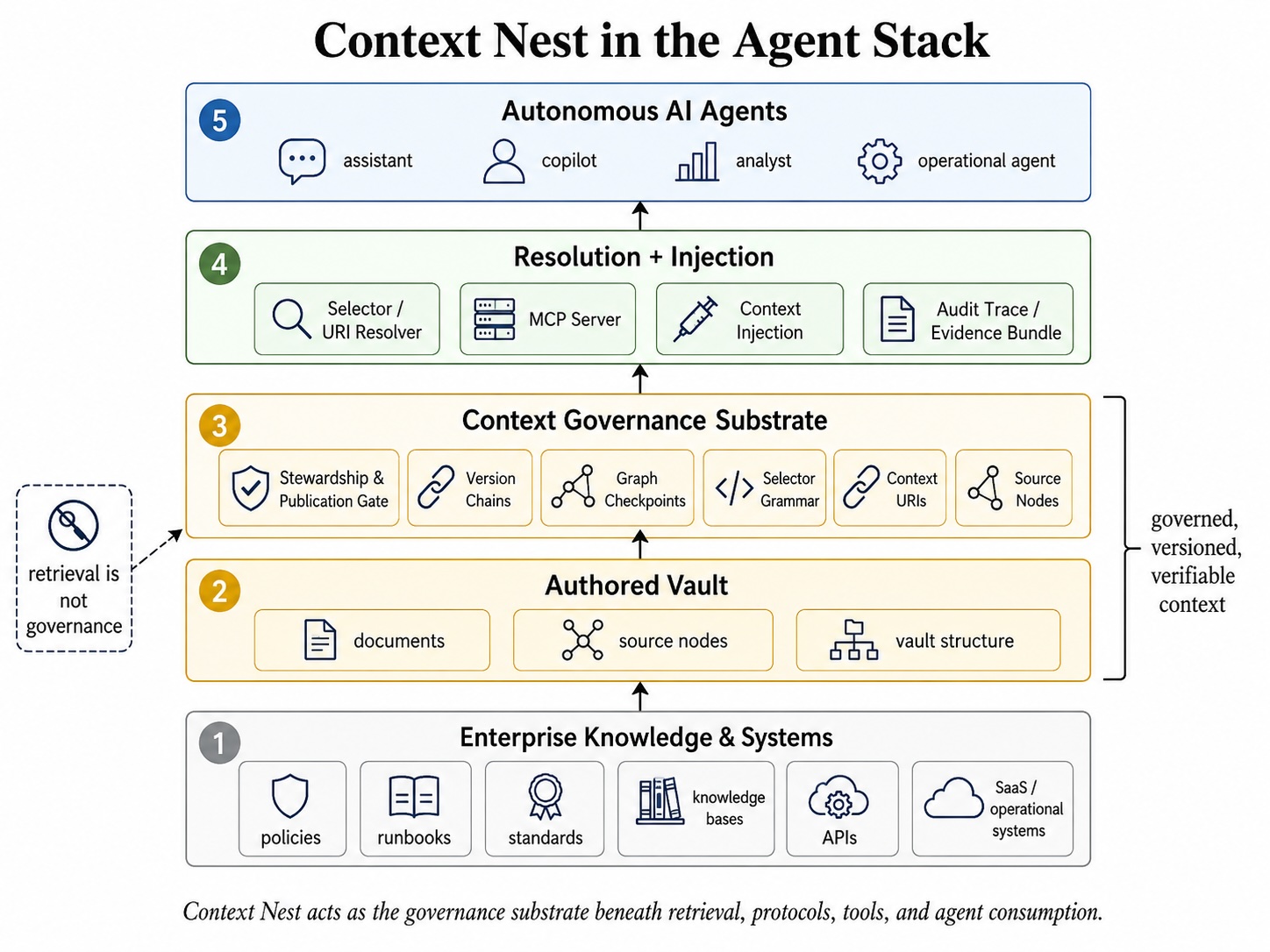}
\caption{ContextNest in the agent stack. The authored vault and the context-governance substrate sit between enterprise knowledge systems below and resolution, injection, and autonomous agents above. ContextNest determines which artifacts are eligible for AI consumption, at which version, under which stewardship, with which integrity guarantees, and with what audit-traceable record; retrieval pipelines operate over the governed subset, and the audit trace flows back up to whichever agent consumed the context---\emph{retrieval is not governance}.}
\label{fig:agent-stack}
\end{figure}

\paragraph{Scope.}
This paper specifies inference-time knowledge governance. \textbf{ContextNest governs the knowledge basis of agent action; it does not govern the full action-authorization stack.} The two boundaries are deliberate. Knowledge governance --- which artifact, at which version, attributed to which steward, with what integrity guarantees --- is the substrate on which any action-authorization layer must rest, because the lawfulness or appropriateness of an agent's action is partly a function of the knowledge that justified it. Action governance proper --- agent identity, owner attestation, capability delegation, runtime policy enforcement on tool calls, and the authorization of outcomes --- is a complementary layer scoped to future work (§\ref{sec:future-work}). Training-data governance, prevention (as distinct from detection) of tampering, real-time multi-user collaboration, and the inter-vault federation protocol are likewise scoped to future work.

\subsection{ContextNest and RAG are Complementary}
\label{sec:complementary}

Throughout this paper we contrast ContextNest with Retrieval-Augmented Generation. The contrast is architectural, not adversarial. We make the relationship explicit here because misreading it weakens both systems.

RAG and ContextNest answer different questions over the same underlying corpus. RAG answers \emph{``which passages are relevant to this query?''}---a retrieval question that benefits from semantic similarity, dense embeddings, and approximate-nearest-neighbor indices. ContextNest answers \emph{``which documents are approved, current, attributable, and integrity-verified, and may an agent consume them right now?''}---a governance question that benefits from typed metadata, stewardship state, hash-chained version history, and deterministic selection.

A production system can---and in many enterprise contexts \emph{should}---use both. The natural composition is: ContextNest governs which documents are eligible for AI consumption (the published, integrity-verified subset of the vault); RAG indexes that governed subset for similarity search; the agent receives both the retrieved passages and the audit trace (§\ref{sec:injection}) that records exactly which governed versions informed its context window. The result is semantic retrieval over a governed substrate, with traceability and reproducibility properties that neither system provides alone.

\paragraph{From personal context to a governed organizational asset.}
The deployment we envision is less a plumbing change than an organizational one. In most enterprises today, the context that conditions an AI system is fragmented across individuals, locked to a particular model or vendor, and discarded after each session---it is personal and ephemeral rather than institutional. ContextNest is the substrate that makes organizational knowledge a governed, portable, model-agnostic asset: one that survives model upgrades, vendor changes, and staff turnover, because provenance, version identity, and eligibility for AI use are properties of the artifact rather than of whoever happened to prompt the model. Concretely, an organization places ContextNest beside its existing document repositories and \emph{before} its retrieval index. The adoption pipeline is: existing corpus $\rightarrow$ ContextNest vault and governance gate (typing, stewardship, publication) $\rightarrow$ RAG index over the published, current subset $\rightarrow$ MCP/agent injection $\rightarrow$ audit trace and evidence bundle $\rightarrow$ point-in-time reconstruction. Retrieval quality is unchanged by this arrangement; what changes is that every indexed artifact is an approved, versioned, attributable node, and every consumption is recorded against the checkpoint in force at the time.

The claim of this paper is therefore not that RAG is wrong, but that retrieval and governance are different functions, and conflating them leaves the governance function unaddressed. The framing is consistent with the Cognitive Apportionment principle of Fjeldstad and Konsynski~\citep{fjeldstad1986apportionment}: in a partnered human-machine cognitive system, the question is not which single mechanism wins but which processor (here: which subsystem) is best equipped to handle each function. Retrieval is one processor; governance is another; the model is the execution layer for both~\citep{konsynski2024cognitive}. Section~\ref{sec:problem} formalizes the governance function as six properties; the remainder of the paper specifies a system that provides them.

\subsection{Architectural Overview}
\label{sec:architecture-overview}

Before specifying the mechanisms in detail, we sketch the architectural shape so that the body sections do not read as a sequence of unmotivated components. Figure~\ref{fig:agent-stack} situates ContextNest within the surrounding agent stack---enterprise knowledge systems below, agent resolution and injection above---and Figure~\ref{fig:governed-flow} traces the governed-context flow, from authoring and steward approval through to agent consumption and audit, that the components below realize. ContextNest is composed of five interlocking parts.

\emph{Typed documents.} The atomic unit of the vault is a Markdown document with a YAML frontmatter header (§\ref{sec:document}). Each document carries a node \emph{type} drawn from a small fixed set---\texttt{document}, \texttt{snippet}, \texttt{glossary}, \texttt{persona}, \texttt{prompt}, \texttt{source}, \texttt{tool}, \texttt{reference} (Table~\ref{tab:nodetypes})---that classifies the document's role in the knowledge graph and enables type-selective queries (e.g.\ ``all glossary entries,'' ``all source nodes''). Types are not a taxonomy of subject matter; they are a taxonomy of \emph{what an agent should do with the document}.

\emph{A knowledge graph.} Cross-document references are expressed as \texttt{contextnest://} URIs in document bodies (§\ref{sec:uri}); tags in frontmatter induce additional set-membership edges; source nodes declare dependency edges to other source nodes. The vault is therefore a directed graph whose nodes are typed documents and whose edges are URI references, tag co-membership, and explicit dependencies. The graph is implicit in the source files and materialized on demand by the engine; it is not maintained as a separate database. The selector grammar (§\ref{sec:selectors}) is the algebra over this graph.

\emph{A stewardship layer.} Above the document and graph layers, ContextNest defines a hierarchical stewardship model (§\ref{sec:stewardship}) that binds principals to scopes (document, folder, tag, vault) at one of three roles (Viewer, Editor, Reviewer). The stewardship layer answers the question \emph{which principal authorized this version's eligibility for AI consumption}, and is the seat of the separation-of-duties rule.

\emph{Hash-chained history and checkpoints.} Every document carries an append-only version history with a SHA-256 chain hash over content, author, timestamp, and ordinal (§\ref{sec:integrity}); the vault as a whole carries an append-only checkpoint log with cross-chain binding to the per-document histories (§\ref{sec:checkpoints}). Together these provide tamper-evidence at both the document level and the graph level, and enable temporal reconstruction of the graph state at any past checkpoint.

\emph{Injection and audit trace.} Agents interact with the vault through a context-injection protocol (§\ref{sec:injection}) that emits a structured audit record for every document or source-node access, identifying the consumed version and the checkpoint at the time of access. The audit trace is the surface through which the governance guarantees of the body sections become visible to the consumer.

\begin{figure}[t]
\centering
\includegraphics[width=0.95\linewidth]{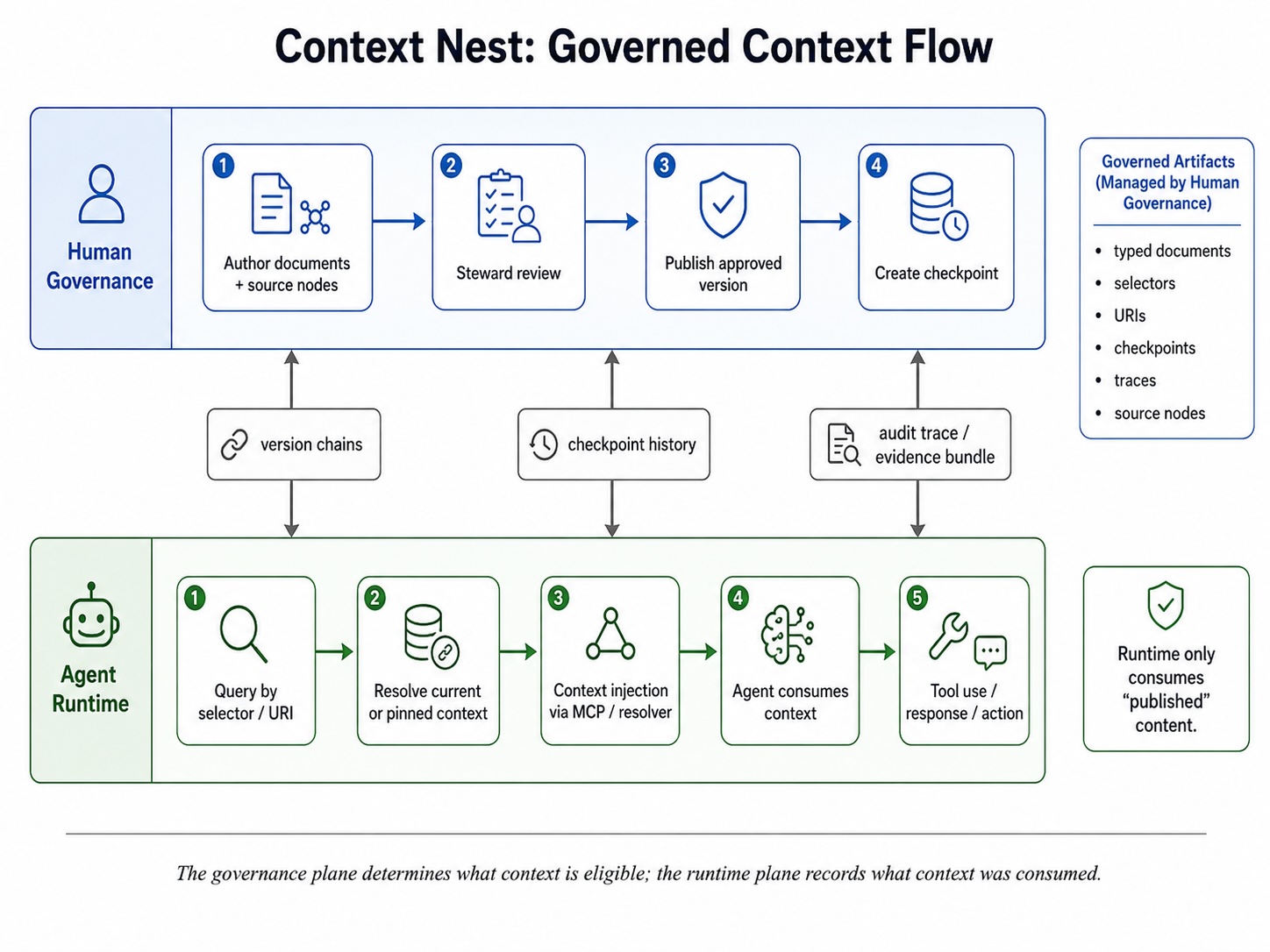}
\caption{Governed context flow. A human-governance plane (author $\rightarrow$ steward review $\rightarrow$ publish approved version $\rightarrow$ checkpoint) produces the governed artifacts---typed documents, selectors, URIs, checkpoints, traces, and source nodes---over which an agent-runtime plane resolves, injects, and consumes context, recording an audit trace at each step. The governance plane determines what context is eligible; the runtime plane records what context was consumed. The runtime consumes only \emph{published} content.}
\label{fig:governed-flow}
\end{figure}

The remainder of the paper specifies each layer in turn. Section~\ref{sec:related} surveys related work. Section~\ref{sec:problem} formalizes context governance as six properties (Definition~\ref{def:governance}) and presents the threat model. Section~\ref{sec:document} covers the document model and the stewardship layer. Section~\ref{sec:selectors} defines the selector grammar; Section~\ref{sec:uri} describes the addressable context (URI) scheme. Sections~\ref{sec:integrity}--\ref{sec:checkpoints} detail integrity verification and checkpoints. Section~\ref{sec:injection} describes injection and tracing, including sub-document selection for large documents. Section~\ref{sec:sources} covers live-data integration via source nodes. Section~\ref{sec:implementation} describes the reference implementation and reports the empirical results. Section~\ref{sec:discussion} discusses limitations and future work.

\section{Related Work}
\label{sec:related}

\subsection{Retrieval-Augmented Generation}

RAG \citep{lewis2020retrieval} augments language model generation with passages retrieved from an external corpus. The standard RAG pipeline embeds documents as vectors, retrieves the top-$k$ most similar passages to a query, and conditions the model's generation on the retrieved context. Subsequent work has improved retrieval quality through better embedding models \citep{izacard2022unsupervised}, hybrid sparse-dense retrieval \citep{chen2024benchmarking}, re-ranking \citep{nogueira2019passage}, and query decomposition strategies \citep{press2023measuring}.

While these advances improve \emph{retrieval relevance}, they do not, as an abstraction, address \emph{retrieval governance}. The RAG pipeline provides no architectural mechanism for tracking document authorship, enforcing version control, detecting post-ingestion tampering, or producing audit trails of which specific document versions informed a given output. Production deployments may layer metadata, snapshots, access controls, or versioned indices atop the abstraction, but those additions are governance infrastructure \emph{external} to the retrieval mechanism itself. The claim of this paper is precise on this point: governance must be supplied by a layer the retrieval abstraction does not provide.

\subsection{Knowledge Graphs and Structured Knowledge}

Knowledge graph approaches \citep{bordes2013translating,ji2022survey} represent information as structured triples, enabling precise querying and relationship traversal. GraphRAG \citep{edge2024local} extends this by constructing knowledge graphs from documents to improve retrieval over global queries. However, knowledge graphs require explicit entity extraction and relationship typing---a lossy transformation from natural-language knowledge that discards nuance, qualification, and context-dependent meaning.

ContextNest takes a different approach: documents remain in their authored form (Markdown), while structured metadata (frontmatter) and explicit references (\texttt{contextnest://} URIs) provide the queryable structure that knowledge graphs offer without requiring semantic decomposition.

\subsection{Version Control for Data and Documents}

Git \citep{torvalds2005git} provides content-addressable storage and cryptographic integrity for source code. DVC \citep{kuprieiev2021dvc} extends version control to data and ML artifacts. LakeFS \citep{treeverse2020lakefs} applies Git-like branching to data lakes. These systems version \emph{files} but do not model the semantic metadata (authorship, approval status, document type, cross-references) required for governed knowledge.

ContextNest's version history model is inspired by Git's content-addressable approach but operates at the document level with domain-specific semantics: each version entry records not just content changes but authorship, publication status, and a hash chain that enables independent integrity verification without a centralized server.

\subsection{Model Context Protocol}

The Model Context Protocol (MCP) \citep{anthropic2024mcp} standardizes how AI applications provide context to language models through a client-server architecture. MCP defines tool use, resource access, and prompt templates as primitives for model-context interaction. ContextNest's MCP server implementation exposes vault operations as MCP tools, and source nodes declaratively specify MCP tool calls for live data hydration.

\subsection{Data Provenance and Audit Trails}

Provenance tracking in databases \citep{buneman2001why,green2007provenance} and scientific workflows \citep{moreau2011open} has a rich history. The W3C PROV specification \citep{groth2013prov} defines a general provenance data model. In AI systems, provenance has received attention primarily in the context of training data documentation \citep{gebru2021datasheets} and model cards \citep{mitchell2019model}.

ContextNest extends provenance to the \emph{inference-time knowledge supply chain}: not which data trained the model, but which documents informed a specific AI interaction, who authored them, and whether they have been tampered with since.

\subsection{Trustworthy Agentic Systems and Cross-Organizational Standards}

A complementary literature has emerged around trustworthy agentic systems---the governance, observability, and evaluation requirements for AI systems that take operational actions across organizational boundaries~\citep{konsynski2026trustworthy}. This literature is itself the contemporary form of a longer Information Systems tradition on the progressive offloading of cognitive labor from humans to machines: from \emph{delegation} of perception and analysis to a community of intelligent agents~\citep{elofson1991delegation}, to dynamic \emph{cognitive apportionment} between human and machine processors~\citep{fjeldstad1986apportionment}, to formal \emph{cognitive reapportionment} of judgment and decision rights between human and system co-cognitors~\citep{sviokla1994reapportionment, konsynski2024cognitive}. Each stage of the lineage has identified the same enabling factor for the next: a substrate of grounded, attributable, auditable knowledge that the system can be trusted to consume. The control-plane decomposition we adopt in §\ref{sec:implementation} (policy, evaluation, telemetry, governance) is drawn from this literature. Adjacent standards relevant to cross-organizational agent interaction include the Agent Payments Protocol~\citep{ap22025}, Mastercard's Agent Pay framework for agentic commerce~\citep{mastercard2025agentpay}, ISO/IEC~42001 for AI management systems~\citep{iso42001}, the NIST AI Risk Management Framework~\citep{nist2023ai}, OpenTelemetry as a telemetry baseline~\citep{opentelemetry}, and the OWASP Top~10 for LLM applications as an adversarial baseline~\citep{owaspllm}. ContextNest provides the inference-time knowledge-governance substrate compatible with these standards: a portable specification of the data layer that their cross-organizational governance frameworks reference, and the artifact-level mechanism for the trustworthy-substrate requirement the IS lineage above has been articulating since the 1980s.

\subsection{Regulatory Context}

The EU AI Act \citep{euaiact2024} imposes transparency requirements on AI systems, including obligations to document the data used in AI decision-making. The NIST AI Risk Management Framework \citep{nist2023ai} identifies data governance as a core function. SOC~2 audits increasingly examine AI data provenance. These regulatory pressures create a practical need for auditable knowledge governance that current RAG architectures cannot satisfy.

\section{Problem Formalization}
\label{sec:problem}

We define \textbf{context governance} as the set of properties required for an AI system to consume external knowledge in a manner that is auditable, trustworthy, and compliant with organizational and regulatory requirements.

\begin{definition}[Context Governance Properties]
\label{def:governance}
Let a context system $\mathcal{S} = (D, V, C, A)$ consist of a set of documents $D$, a set of versions $V$ (where each $v \in V$ is associated with a unique document $d(v) \in D$ and an ordinal index $i(v) \in \mathbb{N}$), an append-only checkpoint log $C$, and an access log $A$. We say $\mathcal{S}$ is \textbf{governed} if it satisfies all six of the following properties.
\begin{enumerate}[nosep]
  \item \textbf{Provenance.} For every $v \in V$, $\mathcal{S}$ exposes a tuple $\pi(v) = (\mathrm{author}(v), \mathrm{created\_at}(v), \mathrm{edited\_at}(v))$ that is bound to $v$ and produced as part of any retrieval result.
  \item \textbf{Version identity.} For every document $d \in D$, the versions $\{v \in V : d(v) = d\}$ form a totally ordered sequence indexed by $\mathbb{N}$, and any access record $a \in A$ resolves to a unique $v$.
  \item \textbf{Integrity.} There exists an algorithm $\mathrm{Verify}: V \to \{\bot, \top\}$ such that any post-publication modification to $v$'s content, $\pi(v)$, or position in the sequence yields $\mathrm{Verify}(v) = \bot$ with overwhelming probability under standard cryptographic assumptions on the underlying hash function.
  \item \textbf{Deterministic selection.} The query function $Q : \mathrm{Selectors} \times \mathbb{N} \to 2^V$ is pure: for any selector $s$ and checkpoint number $n$, $Q(s, n)$ depends only on $s$, $n$, and the immutable history of $\mathcal{S}$ up to $n$---not on retrieval-time state, randomness, or implementation.
  \item \textbf{Traceability.} For every model output $o$ produced by an agent over $\mathcal{S}$, $A$ contains records $\{a_1, \dots, a_k\}$ such that each $a_i$ identifies a specific $v \in V$, and $\{v_i\}$ is exactly the context that informed $o$.
  \item \textbf{Temporal consistency.} For any $n \in \mathbb{N}$, $\mathcal{S}$ can reconstruct the set of versions current at checkpoint $n$ such that the reconstruction is bit-identical across invocations and across implementations conforming to the specification.
\end{enumerate}
\end{definition}

\begin{proposition}
\label{prop:rag-fails}
A standard RAG architecture---an embedding index over chunked documents with top-$k$ retrieval---does not provide Properties~1--6 of Definition~\ref{def:governance} as part of the retrieval abstraction. Production systems may supply these properties via additional infrastructure layered atop the retrieval mechanism, but such infrastructure is external to RAG and not the subject of the present claim.
\end{proposition}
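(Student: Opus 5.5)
The plan is to make ``standard RAG architecture'' precise as an abstract interface, and then exhibit, for each of Properties~1--6 of Definition~\ref{def:governance}, a conforming RAG instance that violates it. The abstraction I would fix is a tuple $\mathcal{R} = (\mathrm{chunk}, E, I, \mathrm{top}_k)$. Here $\mathrm{chunk}$ maps documents to text passages, $E$ maps passages to vectors, $I$ is an (approximate) nearest-neighbor index supporting insert and delete, and $\mathrm{top}_k(q)$ returns the $k$ passages whose embeddings are most similar to $E(q)$ under $I$. The only state is the current index contents, and the only output of a query is a list of passages, possibly with scores. Since the claim is that the properties are not provided \emph{by the abstraction}, it suffices to show that each property fails in some legitimate instantiation of this interface. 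Equivalently, nothing in the interface's contract forces the property to hold.

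I would then go property by property, building a small counterexample for each.
\begin{itemize}[nosep]
  \item \textbf{Property~1 (provenance).} Returned objects are passages. The interface carries no tuple $\pi(v)$, and two documents by different authors with identical text chunks yield indistinguishable results.
  \item \textbf{Property~2 (version identity).} Overwrite-on-insert or delete-then-insert leaves no ordinal sequence per document. Alternatively, keeping two versions side by side, as in the vignette of §\ref{sec:intro}, makes a retrieved chunk shared by both versions resolve to no unique $v$.
  \item \textbf{Property~3 (integrity).} The interface defines no $\mathrm{Verify}$. Replacing a stored chunk (or its vector) in place is a valid index operation that leaves every query still well-formed, so no algorithm derived from the interface can distinguish the modified state from an originally authored one.
  \item \textbf{Property~4 (deterministic selection).} Approximate NN indices such as HNSW are permitted instantiations whose results depend on insertion order, random level assignment and concurrency. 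I would point forward to the determinism experiment of §\ref{sec:determinism} as an empirical witness, while noting that the argument itself only needs one nondeterministic conforming instance.
  \item \textbf{Property~6 (temporal consistency).} The interface has no checkpoint log $C$, so after an update no operation reconstructs the earlier set.
  \item \textbf{Property~5 (traceability).} Properties~1 and~2 already fail, and the interface emits no access log $A$. Even if queries were logged, passages do not identify versions, so $\{v_i\}$ cannot be recovered.
\end{itemize}

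The main obstacle is the scoping clause. One must argue that any mechanism restoring these properties, such as metadata fields, versioned snapshots or audit logs, lies outside the tuple $(\mathrm{chunk}, E, I, \mathrm{top}_k)$, rather than being smuggled in as ``part of RAG.'' I would handle this definitionally. Adding such a mechanism changes the interface (new outputs, new state such as $C$ or $A$), which is precisely the ``additional infrastructure'' the statement excludes. The proposition is therefore best read as a claim about the minimal abstraction, proved by per-property counterexamples rather than by any impossibility result.
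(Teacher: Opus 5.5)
Your proposal is correct and follows the paper's proof: it too goes property by property and attributes each failure to a mechanism missing from the retrieval abstraction. The missing mechanisms are authorship in the result schema, a per-document version ordinal, a chained hash and $\mathrm{Verify}$, a pure selection function (ANN results depend on insertion order and hyperparameters), version-bound access records, and a snapshot facility, with any restoring mechanism treated as external infrastructure. Your explicit interface tuple and ``exhibit a conforming violating instance'' framing is a tidier formalization of the same argument. One small correction for Property~3: tamper with the chunk text and its vector together (delete-then-insert with re-embedding), since replacing only the vector could in principle be caught by re-embedding the stored text with $E$.
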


\begin{proof}
We address each property by counterexample structural to the RAG abstraction.
\begin{enumerate}[nosep]
  \item \emph{Provenance.} The canonical RAG index stores $(\mathrm{embedding}, \mathrm{chunk\_id}) \to \mathrm{text}$ entries. Authorship metadata is not part of the index schema and is therefore not part of any retrieval result returned by the abstraction.
  \item \emph{Version identity.} On document update, the standard pipeline re-embeds the document and either overwrites the prior chunks or relies on chunk identity to detect duplicates; neither maintains a per-document version sequence. There is no $i(v)$.
  \item \emph{Integrity.} The vector index is mutable storage with no chained signature over chunks; modification of a stored chunk produces no detectable signal. There is no $\mathrm{Verify}$.
  \item \emph{Deterministic selection.} Approximate-nearest-neighbor structures (HNSW, IVF, ScaNN) depend on insertion order and hyperparameters; even exact $k$-NN can break ties non-deterministically. $Q$ is therefore not pure.
  \item \emph{Traceability.} RAG pipelines do not by default emit per-output access records that bind to a specific version; the caller may log retrieved chunks, but the system does not guarantee it, and chunks are not version-identified.
  \item \emph{Temporal consistency.} The index has no snapshot mechanism at the API level; recovering the index state at a prior point requires replaying writes from a backup, which is not a property of the RAG abstraction itself.
\end{enumerate}
Each negative result follows from the absence of the relevant mechanism in the RAG abstraction, not from a particular implementation choice. The claim of Proposition~\ref{prop:rag-fails} is therefore architectural: governance must be supplied by a layer the retrieval abstraction does not provide.
\end{proof}

The remainder of this paper presents ContextNest as a system that satisfies all six properties, and §\ref{sec:threat} specifies the adversaries against which Property~3 holds.

\subsection{Threat Model}
\label{sec:threat}

Property~3 (integrity) and the systems built atop it (checkpoints in §\ref{sec:checkpoints}, audit traces in §\ref{sec:injection}) make claims of the form ``modification is detectable.'' We make those claims precise by specifying the adversaries ContextNest defends against, the capabilities each is granted, and the mechanisms that defeat them.

\paragraph{Adversary 1 --- Silent content tamperer.} \emph{Capabilities:} read--write access to vault storage, including version histories. \emph{Goal:} modify the content of a published version $v$ without detection. \emph{Defense:} the per-document chain hash (§\ref{sec:integrity}). Modifying $v$'s content invalidates $v$'s content hash, which propagates to break every chain hash from $v$ onward.

\paragraph{Adversary 2 --- History rewriter.} \emph{Capabilities:} same as Adversary~1, plus the ability to rewrite version metadata (author, timestamp, version ordinal). \emph{Goal:} attribute a malicious edit to a different author, backdate it to predate review, or splice the version sequence. \emph{Defense:} the chain hash binds author, timestamp, and ordinal as part of its input. Rewriting any of them invalidates the chain at that point and all subsequent points.

\paragraph{Adversary 3 --- Checkpoint forger.} \emph{Capabilities:} read--write access to the checkpoint log. \emph{Goal:} fabricate or alter a checkpoint asserting that a particular knowledge state existed at time $t$. \emph{Defense:} checkpoint hashes (§\ref{sec:checkpoints}) bind to per-document chain hashes via cross-chain inclusion. A forged checkpoint either fails to match real chain hashes or requires colluding rewrites of every referenced document history (which Adversary~2's defense already covers).

\paragraph{Adversary 4 --- Stale-version inducer.} \emph{Capabilities:} unable to modify the vault, but can influence retrieval (e.g., through a man-in-the-middle on a federated reference, or a misconfigured client cache). \emph{Goal:} cause the agent to consume a version older than the latest published one. \emph{Defense:} checkpoint-pinned URIs, the resolver's default-to-latest-published rule, and the audit trace (§\ref{sec:injection}). The trace records the consumed version, exposing staleness post-hoc even if it is not prevented in real time. The empirical analogue of this adversary's behavior is exercised in §\ref{sec:stale-attack}.

\paragraph{Out of scope.} ContextNest does not defend against: (a)~prevention of tampering by an authorized writer---the system provides cryptographic \emph{evidence} of tampering, not access control; (b)~confidentiality of vault contents; (c)~availability attacks against vault storage; (d)~social attacks such as a steward approving misinformation. (a) and (d) are addressed by complementary access-control and review-workflow layers built atop the specification; (b) and (c) are concerns of the deployment environment. Agent identity attacks---in which a non-authorized principal impersonates an authorized one---are addressed by complementary attestation mechanisms scoped to future work (§\ref{sec:limitations}, §\ref{sec:future-work}).

\section{Document Model}
\label{sec:document}

\subsection{Structure}

A ContextNest document is a standard Markdown file (GitHub Flavored Markdown, version 0.29-gfm) with a YAML frontmatter header. The frontmatter carries structured metadata; the body carries authored knowledge in Markdown.

\begin{lstlisting}
---
title: "Document Title"
type: document
tags: ["#engineering", "#api"]
status: published
version: 3
author: author@example.com
created_at: 2024-01-15T10:30:00Z
checksum: "sha256:a1b2c3..."
---

# Document Title

Markdown body with [cross-references](contextnest://path/to/doc).
\end{lstlisting}

\subsection{Node Types}

ContextNest defines eight node types that classify the role of each document within the knowledge graph:

\begin{table}[h]
\centering
\begin{tabular}{@{}ll@{}}
\toprule
\textbf{Type} & \textbf{Semantics} \\
\midrule
\texttt{document} & General documentation, guides, overviews \\
\texttt{snippet} & Short, reusable text fragments \\
\texttt{glossary} & Term definitions and vocabulary \\
\texttt{persona} & AI agent behavior definitions \\
\texttt{prompt} & Prompt templates and instructions \\
\texttt{source} & Instructions for fetching live data (see Section~\ref{sec:sources}) \\
\texttt{tool} & Tool documentation and usage guides \\
\texttt{reference} & External references, links, citations \\
\bottomrule
\end{tabular}
\caption{ContextNest node types.}
\label{tab:nodetypes}
\end{table}

The type system enables selective querying (e.g., ``all glossary entries'') and differential treatment by consuming agents (e.g., treating \texttt{persona} nodes as behavioral constraints rather than factual knowledge).

\subsection{Document Status}

Documents carry a \texttt{status} field with two valid values: \texttt{draft} and \texttt{published}. Only published documents are eligible for context injection---the resolver excludes drafts from all query results. This provides a binary gate: work-in-progress knowledge cannot inadvertently reach AI agents.

\subsection{Vault Structure}

A ContextNest vault is a directory with a defined layout:

\begin{lstlisting}
vault/
  CONTEXT.md              # Vault identity and agent instructions
  context.yaml            # Auto-generated document graph index
  .context/
    config.yaml           # Vault configuration
  nodes/                  # Knowledge documents
    .versions/            # Version history
  sources/                # Source nodes for live data
    .versions/
  packs/                  # Context packs (saved queries)
\end{lstlisting}

\texttt{CONTEXT.md} serves as the vault's identity document---a vault-level system prompt that agents read before accessing individual documents. \texttt{context.yaml} is an auto-generated index containing the document registry, relationship graph, hub documents (ranked by inbound references), and external service dependencies.

\subsection{Stewardship Model}
\label{sec:stewardship}

The status field of §\ref{sec:document}.3 provides a binary publication gate, but it does not specify \emph{who} has the authority to flip a document from draft to published. ContextNest defines a hierarchical stewardship model that resolves this question deterministically. The model is composed of three structural elements---a scope hierarchy, a role lattice, and a separation-of-duties rule---governed by a vault-level mode switch that determines whether the enforcement layer is active. We describe the scope hierarchy and role lattice first, then the governance-mode switch (since solo and team vaults differ on whether the enforcement applies at all), and finally the separation-of-duties rule that the governed mode activates. Together, these realize the principle that humans remain accountable for the outcomes of AI agents that consume the vault---accountability is preserved at the artifact level through an auditable chain of authorship and approval.

\subsubsection{Scope Hierarchy}

A \emph{steward} is a principal---typically identified by email---who governs a subset of the vault. Each steward is bound at exactly one of four scopes. When the system needs to determine which steward governs a particular document (for review, approval, or read-access decisions), it resolves stewardship in priority order: \emph{first match wins}.

\begin{table}[h]
\centering
\small
\begin{tabular}{@{}clll@{}}
\toprule
\textbf{Priority} & \textbf{Scope} & \textbf{Binding} & \textbf{Example} \\
\midrule
1 & Document & A specific node identifier  & \texttt{nodes/pricing-policy} --- only this document \\
2 & Folder   & A path prefix               & \texttt{nodes/legal/} --- anything under \texttt{legal/} \\
3 & Tag      & A tag name (case-folded)    & \texttt{security} --- any document tagged \texttt{\#security} \\
4 & Vault    & The fallback scope          & The entire vault --- anything not covered above \\
\bottomrule
\end{tabular}
\caption{Stewardship scope hierarchy. Resolution order is by priority; ties at the same scope break by most-specific match.}
\label{tab:stewardship-scopes}
\end{table}

The resolution function $\sigma : D \to S$---assigning a steward $\sigma(d)$ to each document $d$---is total (the vault-scope fallback guarantees coverage) and deterministic (priority ordering plus a defined tie-break: longest path prefix, lexicographically smallest tag).

Stewardship assignments are themselves stored as a vault-level configuration (a \texttt{stewards.yaml} file at the vault root), making the assignment record portable, auditable, and version-controlled alongside the rest of the vault.

\subsubsection{Role Lattice}

Each steward holds one of three roles, partially ordered by capability:

\begin{table}[h]
\centering
\small
\begin{tabular}{@{}lccc@{}}
\toprule
\textbf{Role} & \textbf{Read} & \textbf{Edit} & \textbf{Approve / Reject} \\
\midrule
Viewer   & yes & no  & no  \\
Editor   & yes & yes & no  \\
Reviewer & yes & yes & yes \\
\bottomrule
\end{tabular}
\caption{Role lattice for stewards. Reviewer dominates Editor, which dominates Viewer.}
\label{tab:stewardship-roles}
\end{table}

A principal may hold different roles at different scopes; the effective role for any given document is the role bound at $\sigma(d)$.

\subsubsection{Governance Modes}
\label{sec:governance-modes}

A vault declares one of two governance modes via a top-level configuration flag. The mode determines whether the stewardship machinery of §\ref{sec:separation-of-duties} is active for this vault, so we state it before describing the enforcement rule itself.

\begin{itemize}[nosep]
  \item \textbf{Ungoverned} (default). New documents are auto-approved at creation and immediately AI-eligible. The stewardship machinery is inert. This mode is appropriate for personal vaults, prototypes, single-author research corpora, and any setting in which the author and the consumer of the vault are the same principal. The format-level guarantees of §\ref{sec:integrity}--§\ref{sec:checkpoints} (hash chains, checkpoints, audit trace) still apply; only the approval-workflow rule of §\ref{sec:separation-of-duties} is suspended.
  \item \textbf{Governed}. New documents enter as drafts. Approval requires a non-author Reviewer at the resolved scope (§\ref{sec:separation-of-duties}). Read access may additionally be gated by role.
\end{itemize}

Mode is a property of the vault, not of individual documents, so the governance posture of a vault is unambiguous from inspection of the configuration. A vault may be migrated from \emph{ungoverned} to \emph{governed} as the team or risk surface grows; the migration is a one-line configuration change, and the existing history---including pre-migration approvals recorded under the ungoverned default---is preserved verbatim under the chain-hash protections of §\ref{sec:integrity}. This migration pattern is the artifact-level realization of \emph{graduated autonomy}: cognitive authority is reapportioned to the system in proportion to the trust threshold the organization is willing to establish~\citep{konsynski2024cognitive}, and the vault posture follows.

\subsubsection{Separation of Duties}
\label{sec:separation-of-duties}

In \emph{governed} mode (§\ref{sec:governance-modes}), the model enforces a structural rule that no individual principal may unilaterally promote their own work into the published, AI-eligible state. (In \emph{ungoverned} mode the rule does not apply: a single author can author and publish in one step. The rule below is the governed-mode addition.)

\begin{proposition}[Authorial Separation]
\label{prop:authorial-separation}
In governed mode, for any version $v \in V$ with author $\alpha(v)$, the principal who effects the transition $\mathrm{status}(v) = \mathtt{draft} \to \mathtt{published}$ must satisfy $\mathrm{principal} \neq \alpha(v)$.
\end{proposition}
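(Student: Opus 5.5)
The plan is to treat Proposition~\ref{prop:authorial-separation} as a statement about the transition system the specification defines. It is not a theorem about arbitrary storage. I would show that every admissible $\mathtt{draft}\to\mathtt{published}$ transition in governed mode passes through an approval operation whose precondition excludes the author.

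\textbf{Step 1: fix the transition relation.} In governed mode, new documents enter as drafts (§\ref{sec:governance-modes}). The only specified operation that sets $\mathrm{status}(v)=\mathtt{published}$ is steward approval. I would state this as a closure property: every published version in a conforming governed vault has a recorded approval event $e$ with an approving principal $p(e)$.

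\textbf{Step 2: state the approval precondition.} The precondition of that operation has two parts. First, $p(e)$ must hold the Reviewer role at the resolved scope $\sigma(d(v))$. This is well defined because $\sigma$ is total and deterministic (§\ref{sec:stewardship}). Second, $p(e)\neq\alpha(v)$. Since the effective role is read off at $\sigma(d)$, no principal can escape the check by holding Reviewer at some other scope.

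\textbf{Step 3: conclude.} Any principal who effects the transition satisfies the precondition, so $\mathrm{principal}\neq\alpha(v)$.

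\textbf{Step 4: handle bypass routes.} I would rule out paths that publish without the approval operation:
\begin{enumerate}[nosep]
\item \emph{Direct edits to the status field by an Editor.} The resolver treats a version as published only if it has a valid approval record.
\item \emph{Migration from ungoverned mode.} Pre-migration auto-approvals are preserved verbatim but were not made in governed mode, so they fall outside the statement's hypothesis.
\item \emph{Rewriting $\alpha(v)$ afterward to fake separation.} This is an Adversary~2 attack. The chain hash binds the author, so $\mathrm{Verify}$ detects it under Property~3.
\end{enumerate}

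The main obstacle is Step~4. The threat model puts prevention of tampering by an authorized writer explicitly out of scope, so the proposition cannot be proved as an unconditional invariant of storage. I would therefore split the claim into two parts:
\begin{enumerate}[nosep]
\item \emph{Enforcement.} Every transition performed through the conforming engine satisfies the inequality, by Steps 1--3.
\item \emph{Detection.} Any published version without a valid non-author approval record, including one produced by out-of-band tampering, is detectable by verification.
\end{enumerate}
This relies on the approval record (approver and timestamp) being included in the chained version entry, as in §\ref{sec:integrity}.
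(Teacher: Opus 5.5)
Your Steps 1--3 are the paper's entire argument. The paper gives no derivation beyond the remark that the rule is enforced at the API layer that mutates document status, so the proposition holds by construction for every transition made through that layer; your enforcement half captures this correctly.

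The problems are in Step~4 and the detection half, which claim more than the specification provides. In 4(1) you assume the resolver counts a version as published only if it carries a valid approval record. The paper says the opposite. The format-level gate is the bare \texttt{status} field, which every conforming reader honors. Separation of duties is a platform-layer guarantee that is ``not strictly required of every implementation,'' and a minimal client reading the vault from disk ``sees only the published-vs-draft distinction'' (\S\ref{sec:limitations}). The direct-edit route is therefore open at the format level. The proposition is true only relative to an enforcing platform layer, which is exactly how the paper scopes it.

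The detection claim does not follow either. $\mathrm{Verify}$ in Property~3 flags modification of an existing entry's content, provenance, or position. It does not test the policy predicate that every publish event has a Reviewer approver distinct from the author. A writer who appends a correctly chained self-approval, or who flips the status and appends a consistently hashed entry, leaves an intact chain. The paper also concedes that such an actor can rewrite hashes as well as content. Catching this needs a separate policy audit over the history. Even that certifies only the \emph{recorded} approver: an entry naming some other Reviewer is an impersonation attack, which the paper places out of scope. What the chain can give you is that an audit of an intact history exposes any recorded self-approval. That is strictly weaker than the proposition's claim about the principal who actually effects the transition, which rests on the enforcement half alone.
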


Equivalently: even a Reviewer-level steward cannot approve a version they themselves authored. The rule is enforced at the system boundary---at the API layer that mutates document status---not at the UI layer, so it is robust to client misbehavior. This is the standard control-theoretic separation of duties applied to AI knowledge supply: the editor of a version and its approver must be distinct principals.

\subsubsection{Relation to Property 5 (Traceability)}

The stewardship model strengthens Property~5 of Definition~\ref{def:governance}. The audit log $A$ records not only \emph{which version} informed each AI output but, transitively via the stewardship binding at the time of approval, \emph{which principal authorized that version's eligibility for AI consumption}. Approval events are first-class entries in the version history (§\ref{sec:integrity}) and inherit the chain-hash protections of §\ref{sec:integrity}---an attempt to retroactively rewrite an approval event invalidates the chain. The resulting record functionally constitutes an \emph{evidence bundle} for the agent's consumption of the artifact, as discussed in §\ref{sec:injection}.

\section{Selector Grammar}
\label{sec:selectors}

ContextNest defines a deterministic, set-algebraic query language for selecting documents. Unlike embedding-based retrieval, selectors produce identical results for identical queries---satisfying Property~4 (deterministic selection).

\subsection{Formal Grammar}

\begin{lstlisting}
selector := term (('|' term)*)
term     := factor ((('+' | ' ') factor)*)
factor   := atom | atom '-' atom | '(' selector ')'
atom     := tag | uri | pack_ref | type_filter
          | status_filter | transport_filter | server_filter
tag         := '#' IDENTIFIER
uri         := 'contextnest://' PATH ('@' INTEGER)? ('#' ANCHOR)?
pack_ref    := 'pack:' IDENTIFIER
type_filter := 'type:' NODE_TYPE
status_filter   := 'status:' STATUS
transport_filter := 'transport:' TRANSPORT
server_filter    := 'server:' IDENTIFIER
\end{lstlisting}

\subsection{Operator Semantics}

Operators perform set operations over the universe of published documents:

\begin{table}[h]
\centering
\begin{tabular}{@{}ccl@{}}
\toprule
\textbf{Operator} & \textbf{Name} & \textbf{Semantics} \\
\midrule
\texttt{+} (or space) & Intersection & Documents matching both operands \\
\texttt{|}             & Union        & Documents matching either operand \\
\texttt{-}             & Difference   & Documents matching left but not right \\
\texttt{( )}           & Grouping     & Precedence override \\
\bottomrule
\end{tabular}
\caption{Selector operators. Precedence (highest to lowest): \texttt{()} $>$ \texttt{+} $>$ \texttt{-} $>$ \texttt{|}.}
\label{tab:operators}
\end{table}

\subsection{Complexity}

Selector evaluation runs in $O(|S| \cdot \bar{p} + r)$, where $|S|$ is the number of atomic terms in the selector, $\bar{p}$ is the average size of a tag, type, or status posting list, and $r$ is the size of the result set. Set operators (\texttt{+}, \texttt{|}, \texttt{-}) are linear in the operand sizes. Posting lists are maintained at write time, so evaluation does not require a full vault scan. Pack expansion is one level of indirection---packs may not nest recursively---and so adds at most a constant factor.

\subsection{Context Packs}

Packs are named, stored selector expressions with optional agent instructions. They compose: a selector can reference a pack (\texttt{pack:sprint.standup}), and the pack's query is expanded inline during evaluation. Packs may include both static document nodes and source nodes; the resolver hydrates sources according to declared dependency ordering.

\section{Addressable Context Scheme}
\label{sec:uri}

ContextNest defines a URI scheme for stable, versionable references between documents:

\begin{table}[h]
\centering
\begin{tabular}{@{}ll@{}}
\toprule
\textbf{URI Pattern} & \textbf{Resolves To} \\
\midrule
\texttt{contextnest://path} & Latest published version \\
\texttt{contextnest://path@N} & Version at checkpoint $N$ \\
\texttt{contextnest://path\#anchor} & Section within document \\
\texttt{contextnest://tag/\{name\}} & All documents with tag \\
\texttt{contextnest://folder/} & All documents in folder \\
\texttt{contextnest://search/\{query\}} & Full-text search \\
\bottomrule
\end{tabular}
\caption{ContextNest URI patterns.}
\label{tab:uri}
\end{table}

\subsection{URI Grammar}
\label{sec:uri-grammar}

The URI patterns of Table~\ref{tab:uri} are defined by the following grammar (ABNF, RFC~5234). A URI's resolution class---direct, set, or delegated (§\ref{sec:resolution-classes})---is determined syntactically by which production it matches.

\begin{verbatim}
uri         = "contextnest://" [ authority "/" ] resource
authority   = namespace                  ; federated / scoped modes only
resource    = direct / set / delegated
direct      = path [ "@" checkpoint ] [ "#" anchor ]
set         = "tag/" tagname / path "/"  ; folder form = trailing slash
delegated   = "search/" query
path        = segment *( "/" segment )
segment     = 1*( unreserved / pct-encoded )
checkpoint  = 1*DIGIT                     ; checkpoint ordinal N
anchor      = segment                     ; intra-document section id
tagname     = segment
query       = 1*( unreserved / pct-encoded / "+" )
unreserved  = ALPHA / DIGIT / "-" / "." / "_" / "~"
pct-encoded = "%" HEXDIG HEXDIG
\end{verbatim}

\noindent Canonicalization (§\ref{sec:uri}, ``URI Canonicalization'') applies before matching: dot-segments are resolved, consecutive slashes rejected, percent-encoding normalized, the authority lowercased, and path segments treated as case-sensitive.

\subsection{Resolution Modes}

\textbf{Floating resolution} (no \texttt{@N}): Returns the latest published version. Appropriate for most cross-references where the author intends ``whatever the current version is.''

\textbf{Pinned resolution} (\texttt{@N}): Returns the document version recorded at checkpoint~$N$ (see Section~\ref{sec:checkpoints}). This provides a checkpoint-consistent view of the knowledge graph---essential for audit replay (Property~6).

\subsection{Direct, Set, and Delegated Resolution}
\label{sec:resolution-classes}

The URI patterns of Table~\ref{tab:uri} fall into three resolution classes, with different guarantees.

\emph{Direct addressing} ($\texttt{contextnest://path}$, $\texttt{path@N}$, $\texttt{path\#anchor}$) resolves to a specific document or section. The result is a single document (or section reference) determined entirely by the path, optional checkpoint, and optional anchor. Direct addressing supports Properties~1--6 of Definition~\ref{def:governance} without reservation.

\emph{Set addressing} ($\texttt{contextnest://tag/\{name\}}$, $\texttt{contextnest://folder/}$) resolves to the set of published documents matching the tag or folder predicate. Membership is determined by the frontmatter and the vault layout at the resolution checkpoint; the result is deterministic (Property~4) because the predicate is structural, not similarity-based. The selector grammar of §\ref{sec:selectors} is the algebra over these set-addressing primitives.

\emph{Delegated retrieval} ($\texttt{contextnest://search/\{query\}}$) is the integration surface for semantic and similarity-based retrieval. \textbf{The specification defines the URI; it does not define the retrieval algorithm.} A conforming engine delegates resolution of \texttt{search/} URIs to a configured retrieval back-end---a RAG pipeline, a hybrid sparse+dense index, a BM25 retriever, or any other system that conforms to a thin resolver interface. The delegation is explicit in the URI form, so a reader (human or agent) inspecting a query knows immediately which class it belongs to and which guarantees apply.

The delegation has two consequences worth stating plainly. First, \texttt{search/} URIs are not, in general, deterministic in the sense of Property~4: a dense retriever may surface different documents on repeated identical queries (the structural property measured in §\ref{sec:determinism}), and a vanilla sparse retriever can be deterministic but is still similarity-based, not structurally selected. The specification flags this by classification rather than by algorithm: any URI consumer that requires Property~4 should select from the direct or set-addressing classes. Second, the audit trace (§\ref{sec:injection}) records the resolved set returned by the delegated retriever, with the chain-hash of each returned document, even though the retrieval mechanism itself is external. The composition pattern of §\ref{sec:complementary} is implemented through this URI class: governed selection over the published, integrity-verified subset uses direct or set addressing; semantic retrieval over that governed subset uses \texttt{search/} URIs whose back-end indexes only the published, current versions.

\subsection{Namespace Federation}

Vaults may declare a namespace and federation mode, enabling cross-vault references. Three modes are supported: \emph{anonymous} (default, all URIs resolve locally), \emph{federated} (URIs with an authority component resolve via a registry to remote vaults), and \emph{scoped} (like federated, but restricted to an explicit allow-list). Federation enables organizational knowledge architectures where multiple teams maintain independent vaults that can reference each other while preserving independent governance.

\subsection{URI Canonicalization}

URIs are canonicalized before resolution: path segments are case-sensitive, dot segments are resolved, consecutive slashes are rejected, percent-encoding is normalized, and the authority component is lowercased. Two URIs differing only in non-canonical representation must resolve identically.

\section{Integrity Verification}
\label{sec:integrity}

ContextNest uses SHA-256 hash chains to detect tampering of version histories and checkpoint logs---satisfying Property~3 (integrity). The requirement for such tamper-evident storage---``append-only logging, hash chains, WORM storage''---is independently identified in the trustworthy-agentic-systems literature as a precondition for cross-organizational agent accountability~\citep{konsynski2026trustworthy}.

\subsection{Document Version Chains}

Each version entry in a document's history carries two hash fields:

\textbf{Content hash}: SHA-256 of the version's content (full snapshot for keyframes, unified diff for intermediate versions).

\textbf{Chain hash}: A cryptographic link to all previous entries:

\begin{equation}
\texttt{chain\_hash}[n] = \text{SHA-256}\left(
\begin{aligned}
&\texttt{chain\_hash}[n{-}1] \mathbin{\|} \texttt{":"} \mathbin{\|} \\
&\texttt{content\_hash}[n] \mathbin{\|} \texttt{":"} \mathbin{\|} \\
&\texttt{version}[n] \mathbin{\|} \texttt{":"} \mathbin{\|} \\
&\texttt{edited\_by}[n] \mathbin{\|} \texttt{":"} \mathbin{\|} \\
&\texttt{edited\_at}[n]
\end{aligned}
\right)
\end{equation}

For the genesis entry, $\texttt{chain\_hash}[n{-}1]$ is replaced by the sentinel string \texttt{contextnest:genesis:v1}.

The string concatenation above is performed over the canonical serialization of each field defined by RFC~8785 (JSON Canonicalization Scheme, JCS) \citep{rfc8785}, which fixes key ordering, number representation, whitespace, and Unicode normalization. This guarantees that two conforming implementations of \texttt{Verify} produce bit-identical chain hashes from equivalent histories.

\textbf{Property}: If any entry in the chain is modified---including its content, author attribution, timestamp, or position---all subsequent chain hashes become invalid, and verification detects the specific point of divergence. The construction follows Merkle's seminal hash-chaining technique \citep{merkle1987digital}, specialized to a per-document append-only log.

\subsection{Version Storage Model}

Version history uses a keyframe-plus-diff model for space efficiency:

\begin{itemize}[nosep]
  \item \textbf{Keyframe versions} (version~1 and every $k$-th version, default $k = 10$): stored as full Markdown snapshots.
  \item \textbf{Intermediate versions}: stored as unified diffs from the previous version.
  \item \textbf{Reconstruction}: Any version can be reconstructed by applying diffs forward from the nearest keyframe.
\end{itemize}

This mirrors techniques from video compression and database log-structured storage, balancing storage efficiency against reconstruction cost.

\section{Checkpoint System}
\label{sec:checkpoints}

Per-document versioning alone cannot guarantee cross-document consistency. When Document~B is republished, any document referencing it immediately resolves to the new version---potentially breaking assumptions made by the referring document's author. A \textbf{nest checkpoint} provides an atomic snapshot of the entire knowledge graph.

\subsection{Checkpoint Creation}

Each publication event (of any document) triggers a new checkpoint entry in the append-only checkpoint log. Each entry records: a monotonically increasing checkpoint number; the document that triggered the checkpoint; a complete map of every published document to its version number at that instant; a map of every published document to the chain hash of its recorded version (cross-chain binding); and a checkpoint hash chaining this entry to the previous checkpoint.

\subsection{Checkpoint Hash Construction}

\begin{equation}
\texttt{cp\_hash}[n] = \text{SHA-256}\left(
\begin{aligned}
&\texttt{cp\_hash}[n{-}1] \mathbin{\|} \texttt{":"} \mathbin{\|} \\
&\texttt{checkpoint}[n] \mathbin{\|} \texttt{":"} \mathbin{\|} \\
&\texttt{at}[n] \mathbin{\|} \texttt{":"} \mathbin{\|} \\
&\texttt{triggered\_by}[n] \mathbin{\|} \texttt{":"} \mathbin{\|} \\
&\texttt{canonical\_versions}[n] \mathbin{\|} \texttt{":"} \mathbin{\|} \\
&\texttt{canonical\_chain\_hashes}[n]
\end{aligned}
\right)
\end{equation}

Where \texttt{canonical\_versions} and \texttt{canonical\_chain\_hashes} are RFC~8785 (JCS) \citep{rfc8785} serializations of the version and chain-hash maps. We use JCS rather than an ad-hoc canonicalization because it is independently specified, has reference implementations across languages, and removes ambiguity about number representation and Unicode handling---all of which would otherwise cause cross-implementation hash divergence.

\textbf{Cross-chain binding}: The inclusion of \texttt{canonical\_chain\_hashes} cryptographically anchors each checkpoint to the per-document version chains. A verifier confirms that each chain hash in the checkpoint matches the corresponding entry in the document's history. A mismatch indicates that a document's history was rewritten after the checkpoint was created.

\subsection{Temporal Reconstruction}

Given checkpoint number~$N$, the system reconstructs the exact knowledge graph state by: (1)~loading the checkpoint entry for~$N$; (2)~for each document in \texttt{document\_versions}, reconstructing the recorded version from the document's history; (3)~verifying the chain hash of each reconstructed version against \texttt{document\_chain\_hashes}. This satisfies Property~6 (temporal consistency) and enables audit replay.

\paragraph{Complexity.} Reconstruction at checkpoint~$n$ requires $O(|D_n|)$ document loads plus diff replay bounded by the keyframe interval~$k$ per document, for total work $O(|D_n| \cdot k)$. Verification of a checkpoint is $O(|D_n|)$ hash comparisons. Both are linear in the size of the published vault at~$n$ and embarrassingly parallel across documents.

\subsection{Checkpoint History Rebuild}

If the checkpoint log is lost or corrupted, it can be deterministically rebuilt from the per-document histories by collecting all publication events (identified by \texttt{published\_at} timestamps), sorting chronologically, and replaying to reconstruct the checkpoint sequence. A rebuild from intact per-document histories produces an equivalent checkpoint log.

\section{Context Injection and Tracing}
\label{sec:injection}

\subsection{Injection Protocol}

Agents request context by selector query (Section~\ref{sec:selectors}) or URI (Section~\ref{sec:uri}). The resolver: (1)~evaluates the query against published documents; (2)~for source nodes in the result set, orders them by topological sort of the \texttt{depends\_on} graph; (3)~returns the resolved documents with their metadata and version information; (4)~logs the access for audit tracing.

The resolver returns documents; the agent decides whether and how to act on them. For source nodes, the resolver returns the instructions---the agent executes them.

\subsection{Audit Trace Schema}

Every context access produces a trace record containing: the URI of the accessed document, the version number consumed, the current checkpoint at time of access, the document author, the last edit timestamp, and the access timestamp.

For source node hydration (when the agent executes the described tool calls), additional fields record: tools called, server used, result hash (SHA-256 of hydrated content), cache status, and duration.

This satisfies Property~5 (traceability). A complete trace reads: ``The agent resolved \texttt{pack:sprint.standup}, read 3~static documents at checkpoint~12, hydrated \texttt{sources/current-sprint-tickets} via Jira MCP (cache miss, result hash \texttt{sha256:9f1b\ldots}), and generated a summary.''

\paragraph{The audit trace as evidence bundle.}
The trace records described above functionally constitute an \emph{evidence bundle} in the sense developed for cross-organizational agentic commerce~\citep{konsynski2026trustworthy, ap22025}: a durable, machine-verifiable record of what an agent consumed, on whose authority, at what point-in-time, and under what integrity guarantees. Under the principle that AI may automate actions but humans remain accountable for outcomes, this evidence bundle is the mechanism through which the human accountable for an agent's behavior can demonstrate the precise knowledge basis for that behavior. Section~\ref{sec:limitations} discusses the limits of this mechanism, including the agent-identity gap that the bundle does not yet close.

\subsection{Sub-document Selection for Large Documents}
\label{sec:chunking}

The selector grammar (§\ref{sec:selectors}) and the direct/set addressing classes (§\ref{sec:resolution-classes}) operate at the document level. For corpora dominated by small or medium documents---runbooks, standards entries, ADRs, policy paragraphs---this is the appropriate unit: each document is small enough that whole-document injection imposes negligible token cost. For corpora that include large authored documents---enterprise manuals, regulatory specifications, multi-section policy handbooks---whole-document injection is wasteful and, depending on the model's context budget, infeasible.

The specification supports sub-document selection through the \texttt{contextnest://path\#anchor} pattern of Table~\ref{tab:uri}. An anchor is a stable identifier emitted at section boundaries in the document body (Markdown heading IDs by default, with optional explicit \texttt{\{\#anchor-name\}} suffixes for stability across heading edits). The resolver, given a URI of the form \texttt{contextnest://\allowbreak policies/\allowbreak handbook\#\allowbreak vendor-onboarding}, returns the section bounded by the named anchor and its successor in the document, together with the same metadata block that a whole-document resolution would return (version, checkpoint, chain hash, steward).

\paragraph{Selector-anchored chunking.}
The recommended pattern for large documents is therefore not to split the source file into many short documents, but to author the long document as a single governed artifact and reference its sections via anchored URIs. This preserves the selector $\to$ document binding required by Property~4: the anchored URI is part of the deterministic-selection class (§\ref{sec:resolution-classes}), and the audit trace records both the document version and the consumed anchor range. Concretely: an author of a 50-section handbook produces one document with stable anchors per section; a context pack (§\ref{sec:selectors}) declares which anchors to include for which agent task; the agent receives only the named sections, with full provenance, integrity, and traceability over each section it actually consumed.

The pattern composes with the federation and delegated-retrieval surfaces. Anchored URIs may cross-reference sections of a remote vault's document under the federation modes of §\ref{sec:uri} (assuming the remote vault publishes the anchor namespace). And a delegated retriever (§\ref{sec:resolution-classes}) may return \texttt{search/} results at the anchor granularity rather than the whole-document granularity, provided the retriever indexes the document's sections explicitly; the audit trace records the resolved anchor URIs, not just the parent documents. The structural guarantees of the document model are preserved at any granularity for which a stable anchor exists.

The specification does not mandate a particular anchoring discipline (Markdown headings, explicit anchor tags, line ranges, or character offsets); any conforming engine that resolves the anchor space deterministically and includes the anchor identifier in the audit trace satisfies the specification.

\section{Source Nodes: Live Data Integration}
\label{sec:sources}

A source node is a ContextNest document whose body contains instructions for fetching live context from external services. Source nodes are first-class members of the knowledge graph---authored, versioned, governed, and integrity-verified like any other node.

\subsection{Declarative Specification}

Source nodes carry a \texttt{source} metadata block in frontmatter declaring the transport protocol (\texttt{mcp}, \texttt{rest}, \texttt{cli}, \texttt{function}), server identity, required tools, dependencies on other sources, and cache TTL. The Markdown body contains natural-language instructions for the agent: what calls to make, in what order, with what parameters, and how to interpret the results.

\textbf{Design principle}: Frontmatter carries what machines index. The body carries what agents execute.

\subsection{Dependency Resolution}

Source nodes may declare dependencies on other source nodes via \texttt{depends\_on}. The resolver computes a topological sort of the dependency graph and returns sources in hydration order. Circular dependencies are rejected at validation time.

\subsection{Result Lifecycle}

Hydrated results are session-scoped---they are not written to the vault and do not participate in the versioning or integrity mechanisms. Source nodes store \emph{instructions}, not \emph{results}. When a hydrated result is reviewed by a human and promoted to a durable record, it is authored as a standard document node with \texttt{derived\_from} referencing the originating sources.

\subsection{The Staged Source-Node Lifecycle State}
\label{sec:staged-state}

The session-scoped lifecycle of §\ref{sec:sources}.3 keeps the durable vault free of unreviewed external content, but it leaves an audit gap: the trace records the result hash of each hydration and the metadata of the originating source, yet the hydrated content itself is not retained. For agents whose actions can be reverified against the live source's current state, this is the right tradeoff. For agents whose actions may need to be audited against the external state \emph{as it was at the time of consumption}---a state that may have changed irreversibly between consumption and audit---the absent content is exactly the missing evidence.

To close this gap without inverting the storage model, the specification introduces a third lifecycle position---the \textbf{staged} state---between session-scoped hydration and durable publication. The staged state is functionally analogous to a staging area in a version-control system: a captured, attributable, integrity-verified record that exists in the vault, is excluded from agent-eligible context by default, and is either promoted into the durable lifecycle (draft, then published) by a steward or garbage-collected after a configurable retention window.

\paragraph{Capture content.}
When a source node is hydrated and the vault is configured for staging, the engine writes the hydrated result into a staged entry alongside its existing audit-trace record. The staged entry carries the same metadata block as a published document version: the originating source URI, the hydration timestamp, the result hash, the principal who triggered the hydration, and the cache and dependency context. Additionally, an identifier is stored for the session that triggered the hydration.

\paragraph{Capture policy.}
Capture is unconditional with respect to the result content itself. The decision to stage a given result is governed by a policy that is configured at the vault level and applies at the source level. A given source may stage by default, never stage, or require an explicit \texttt{stage:} pragma in the source node's frontmatter.

\paragraph{Integrity construction.}
A staged entry carries a content hash over the canonical serialization of the hydrated payload (RFC~8785 JCS, consistent with §\ref{sec:integrity}) and participates in a per-source-node chain hash analogous to the per-document chain of §\ref{sec:integrity}. The chain binds: the previous staged-entry hash for the same source, the content hash, the originating source URI at its consumed version, the hydration timestamp, and the principal. Modifying a staged entry's content, metadata, or position in the source's staging sequence invalidates the chain at that point and all subsequent points, mirroring the integrity guarantees that apply to published documents. Checkpoint-level cross-chain binding (§\ref{sec:checkpoints}) optionally includes the staged tails of the configured sources, anchoring the staged stream into the vault-wide checkpoint sequence.

\paragraph{Selector visibility.}
Using the staged entry's session identifier, staged entries are included from selector queries with matching session identifiers and by default excluded from all other selector queries. The status predicate of §\ref{sec:document}.3 admits a third value, \texttt{staged}, that a selector may explicitly opt into when an agent or workflow has a legitimate need to read the staged content directly---for example, a stewardship workflow that reviews staged candidates for promotion. The default-excluded behavior preserves the §\ref{sec:document}.3 invariant: nothing that has not passed publication review can be consumed by an unguarded agent. The opt-in path (selectors that explicitly write \texttt{status:staged}) makes review-workflow tools first-class participants in the selector algebra rather than a side channel.

\paragraph{Promotion.}
A staged entry is promoted into the durable lifecycle by a steward at the resolved scope (§\ref{sec:stewardship}). Promotion writes a new \emph{document} node whose body is the staged entry's content, whose frontmatter records the staged-entry origin (a new \texttt{promoted\_from} field analogous to the existing \texttt{derived\_from}), and whose initial status follows the vault's governance mode: \texttt{draft} in \emph{governed} mode (subject to a non-author reviewer per §\ref{sec:separation-of-duties}), \texttt{published} in \emph{ungoverned} mode. The promotion event is itself an entry in the audit trace and inherits the chain-hash protections of §\ref{sec:integrity}. The staged entry is retained as a back-reference target so that the promoted document's \texttt{promoted\_from} URI continues to resolve.

\paragraph{Garbage collection.}
Staged entries that are neither promoted nor explicitly pinned within a configurable retention window are garbage-collected. The collection event is an entry in the audit trace; the collected entry's chain-hash leaf is retained as a tombstone so that the staging chain remains verifiable across the GC boundary. The retention window is a per-vault configuration, with optional per-source overrides for sources whose external state is unusually short-lived (e.g.\ live market data) or unusually durable (e.g.\ regulatory filings). Pinning is a steward-only operation that suspends GC for a named staged entry pending an explicit promotion or discard decision.

\paragraph{Configuration.}
Two vault-level flags govern the staged-state machinery: \texttt{staging.enabled} (default \texttt{false} in \emph{ungoverned} mode, default \texttt{true} in \emph{governed} mode) and \texttt{staging.retention} (default 30 days, configurable per source). A source node may override the vault default by declaring \texttt{stage: never}, \texttt{stage: default}, or \texttt{stage: always} in its frontmatter. Sources that declare \texttt{stage: never} retain the original session-scoped lifecycle of §\ref{sec:sources}.3 verbatim.

\paragraph{How the staged state closes the audit gap.}
With staging enabled, the audit trace of a source-node hydration now resolves to a durable, integrity-verified record of the hydrated content---not merely the hash of content that no longer exists. An auditor examining an agent action six months after the fact can reconstruct the exact external evidence the agent consumed, even if the external service has rotated, the original API endpoint has been deprecated, or the source page has been edited. The trade against permanent retention is bounded by the retention window: ephemeral content does not accumulate indefinitely in the vault, and the steward path is the only mechanism that elevates ephemeral content into durable knowledge. The staged state is thus the artifact-level realization of the principle that \emph{audit completeness is a property of the system, not of the underlying world}: the system retains what it needs to retain to reconstruct its own behavior, on a clock that the organization controls.

\section{Reference Implementation and Empirical Validation}
\label{sec:implementation}

The reference implementation of the ContextNest specification is developed and maintained by PromptOwl, LLC\ (the first author's affiliation), and released as three open-source packages described below. The architecture exposed by these packages may be usefully described through a four-plane decomposition---a \emph{policy plane} (identity, permissions, policy-as-code), an \emph{evaluation plane} (golden cases, rubrics, regression tests), a \emph{telemetry plane} (traces, logs, metrics, tamper-evident storage), and a \emph{governance plane} (RACI, evidence bundles, change management)---a decomposition that the second author has independently surfaced in the trustworthy-agentic-systems literature~\citep{konsynski2026trustworthy}. We adopt the decomposition here for expository convenience; the ContextNest specification itself is architecture-independent, and alternative decompositions of a conforming implementation are equally valid. Under the four-plane reading, the reference implementation realizes the policy plane (through status and stewardship, §\ref{sec:stewardship}), the telemetry plane (through hash chains and audit traces, §\ref{sec:integrity}--§\ref{sec:injection}), and the artifact tier of the governance plane (through stewardship records and the evidence-bundle structure of the audit trace, §\ref{sec:injection}). The evaluation plane is exercised through the experimental program of §\ref{sec:experiments}; first results are reported in §\ref{sec:first-results}--§\ref{sec:stale-attack} below.

The three packages of the reference implementation are as follows.

\textbf{Context Engine} (\texttt{@promptowl/contextnest-engine}): Core library implementing document parsing, storage abstraction, selector evaluation, version management, checkpoint management, integrity verification, and context injection with tracing. Authored and maintained by PromptOwl, LLC; released under AGPL-3.0.

\textbf{CLI} (\texttt{@promptowl/contextnest-cli}): Command-line tool providing 19~commands for vault initialization, document management, querying, versioning, and integrity verification. Includes starter recipes for common vault configurations. Authored and maintained by PromptOwl, LLC; released under AGPL-3.0.

\textbf{MCP Server} (\texttt{@promptowl/contextnest-mcp-server}): A Model Context Protocol server exposing vault operations as tools for AI agents. The tool surface is summarized in Table~\ref{tab:mcp-tools}. Authored and maintained by PromptOwl, LLC; released under AGPL-3.0.

PromptOwl, LLC\ also develops additional software that consumes ContextNest vaults at runtime, including an agentic orchestration runner and a desktop client. That software is out of scope for this paper.

\paragraph{License rationale.}
The license split between specification and implementation is deliberate. All three reference-implementation packages---the Context Engine, the CLI, and the MCP Server---are released under AGPL-3.0 as defensive copyleft: the goal is to keep derivatives of the reference \emph{implementation} open, so that improvements to the governance machinery flow back to the ecosystem rather than being absorbed into proprietary forks. The network-copyleft form (AGPL rather than ordinary GPL) is chosen deliberately so that the source-availability obligation is triggered even when the software is offered as a hosted service rather than distributed. The choice is not motivated by any GPL-licensed dependency in the codebase; it is a posture, not a compliance obligation. The specification itself (the \texttt{spec/} directory of the canonical repository) is released under Apache-2.0: the specification is intended to admit any conforming implementation, including proprietary ones, and a widely-adopted permissive license is the standard mechanism for that. Together the two licenses encode a policy: the spec is permissively open so that anyone may implement it, while the reference implementation's improvements stay open by construction.

\begin{table}[h]
\centering
\small
\begin{tabular}{@{}llp{0.55\textwidth}@{}}
\toprule
\textbf{Tool} & \textbf{Mutation?} & \textbf{Purpose} \\
\midrule
\texttt{context\_init}            & no  & Load vault \texttt{CONTEXT.md} (vault-level operating instructions). \\
\texttt{context\_overview}        & no  & Vault map: total nodes, types, tags, title-and-snippet per node. \\
\texttt{context\_search}          & no  & Full-text keyword search across content, titles, tags, metadata. \\
\texttt{context\_resolve}         & no  & Resolve a selector or \texttt{contextnest://} URI to matching nodes. \\
\texttt{context\_read}            & no  & Read a single node by id, with full body and frontmatter. \\
\texttt{context\_neighbors}       & no  & Traverse the reference graph from a node---inbound and outbound. \\
\texttt{context\_pack}            & no  & Resolve a stored pack by id, hydrating included sources. \\
\texttt{context\_diff}            & no  & Compare two versions of a node. \\
\texttt{context\_history}         & no  & Return the version history of a node, with chain hashes. \\
\texttt{context\_verify}          & no  & Verify the chain-hash integrity of a node or the whole vault. \\
\texttt{context\_publish}         & yes & Promote a draft to published (subject to stewardship, §\ref{sec:stewardship}). \\
\texttt{context\_create}          & yes & Author a new draft node. \\
\texttt{context\_update}          & yes & Edit an existing draft node. \\
\texttt{context\_assign\_steward} & yes & Bind a principal to a scope at a role (governed mode only). \\
\bottomrule
\end{tabular}
\caption{MCP tool surface exposed by the reference server. Mutation tools are subject to stewardship checks (§\ref{sec:stewardship}) and contribute to the audit trail (§\ref{sec:injection}).}
\label{tab:mcp-tools}
\end{table}

Each invocation of any tool produces an entry in the audit log $A$ described abstractly in §\ref{sec:injection}: the resolved node identifier, the consumed version, the current checkpoint, the principal, and (for source nodes) the hydration record. The audit log is itself a ContextNest artifact and inherits the chain-hash integrity protections of §\ref{sec:integrity}. The implementation includes a regression test suite that exercises the engine, the CLI, and the MCP server; passing tests are evidence of internal consistency, not of empirical performance. The specification, engine, CLI, and MCP server are available at \url{https://github.com/PromptOwl/context-nest}; reproduction details for the empirical work below are given in §\ref{sec:artifacts}.

\subsection{First Empirical Results}
\label{sec:first-results}

The experiments below are not intended to show that deterministic selection \emph{replaces} semantic retrieval. They isolate a class of failures---context that is textually relevant but organizationally obsolete, and retrieval that varies run-to-run---that retrieval relevance alone is not designed to resolve. We present them as first validation of a specification, not as a general retrieval benchmark.

We report a first run of experiment E1 (token cost: governed selection vs.\ retrieval-augmented baselines, §\ref{sec:experiments}) against a 10-query fixture suite drawn from a synthesized vault of runbooks, ADRs, and standards documents. Two retrieval conditions are compared: the deterministic selector grammar of §\ref{sec:selectors} (using \texttt{ctx resolve}) and BM25 sparse retrieval at $k{=}3$ over the same corpus. Both conditions use Claude Sonnet~4.6 as the answer model and Claude Opus~4.7 as an LLM-judge that grades each answer PASS/FAIL against a rubric of required facts.

\begin{table}[h]
\centering
\begin{tabular}{lrrr}
\toprule
Method & Avg.\ input tokens & Avg.\ output tokens & Pass rate \\
\midrule
Selector (\texttt{ctx resolve}) & 217 & 72 & 0.80 \\
BM25 ($k{=}3$) & 644 & 70 & 0.90 \\
\bottomrule
\end{tabular}
\caption{First E1 run, 10-query fixture suite. The selector achieves a ${\sim}3\times$ reduction in input tokens; BM25 wins by one query on surface pass-rate.}
\label{tab:e1-first-results}
\end{table}

We treat this as preliminary evidence, not a benchmark. The suite is small (10 queries vs.\ the 50 planned in §\ref{sec:experiments}); only two of the three E1 conditions (selector, BM25) were exercised; and the LLM-judge methodology is not yet calibrated against human inter-rater agreement. The headline finding---a ${\sim}3\times$ reduction in input tokens at comparable pass rate---is consistent with the central claim of E1 but warrants the full 50-query Pareto curve before being treated as confirmatory. We emphasize that this run measures \emph{token cost on a clean corpus}: every fact has a single current version, so retrieval quality is comparable across methods and BM25's one-query edge on surface pass-rate is not a meaningful difference. E1 is deliberately \emph{not} a test of governance. The governance question---what happens when the corpus also contains content that must \emph{not} be consumed---is isolated separately in §\ref{sec:stale-attack}, and that is where the distinction becomes decisive (Table~\ref{tab:stale-attack}). The two experiments answer different questions: E1 asks what governed selection \emph{costs}; the stale-version attack asks what it \emph{prevents}.

\subsection{Stale-Version Attack}
\label{sec:stale-attack}

The selector grammar of §\ref{sec:selectors} surfaces only documents in published state. A retrieval system that indexes the storage layer indiscriminately---including the keyframe-plus-diff history of §\ref{sec:integrity}---may surface superseded versions alongside current ones. We tested whether this distinction produces a measurable behavioral difference, designing a sharpened variant of the adversarial-poisoning experiment described in §\ref{sec:experiments} (E5).

\subsubsection*{Setup}

We extended the fixture vault of §\ref{sec:first-results} by authoring stale ``$v2$ archived'' entries for six published documents (three runbooks, two standards, one ADR). Each stale entry contradicts the current published version on five specific facts (numeric thresholds, named tools or channels, decision outcomes, SLAs). We then authored a 30-query suite in which each query targets one such fact, with the rubric grounded in the \emph{current} (correct) answer.

Three retrieval conditions were compared, all using Claude Sonnet~4.6 as the answer model and Claude Opus~4.7 as judge:
\begin{itemize}[nosep]
\item \textbf{Selector} (\texttt{ctx resolve}): tag-and-type predicate over the published vault. By construction, only current published documents are returned.
\item \textbf{BM25 (leaky)}: sparse retrieval at $k{=}3$ over a corpus that includes both current documents and the \texttt{.versions/} history. This models a vector or sparse-text pipeline that indexes the raw storage layer without filtering for publication state.
\item \textbf{BM25 (clean)}: sparse retrieval at $k{=}3$ over a corpus restricted to current published documents only. This models a properly configured production RAG pipeline that excludes version history.
\end{itemize}

\subsubsection*{Results}

Results are summarized in Table~\ref{tab:stale-attack} and visualized in Figure~\ref{fig:stale-attack}.

\begin{table}[h]
\centering
\begin{tabular}{lrr}
\toprule
Method & Avg.\ input tokens & Pass rate \\
\midrule
Selector (\texttt{ctx resolve}) & 215 & 0.97 \\
BM25 leaky (indexes \texttt{.versions/}) & 655 & 0.93 \\
BM25 clean (published-only corpus) & 725 & 0.90 \\
\bottomrule
\end{tabular}
\caption{Stale-version attack, 30-query suite, three retrieval conditions. The selector strictly Pareto-dominates both BM25 conditions: higher pass rate at lower input-token cost.}
\label{tab:stale-attack}
\end{table}

\begin{figure}[h]
\centering
\includegraphics[width=0.95\linewidth]{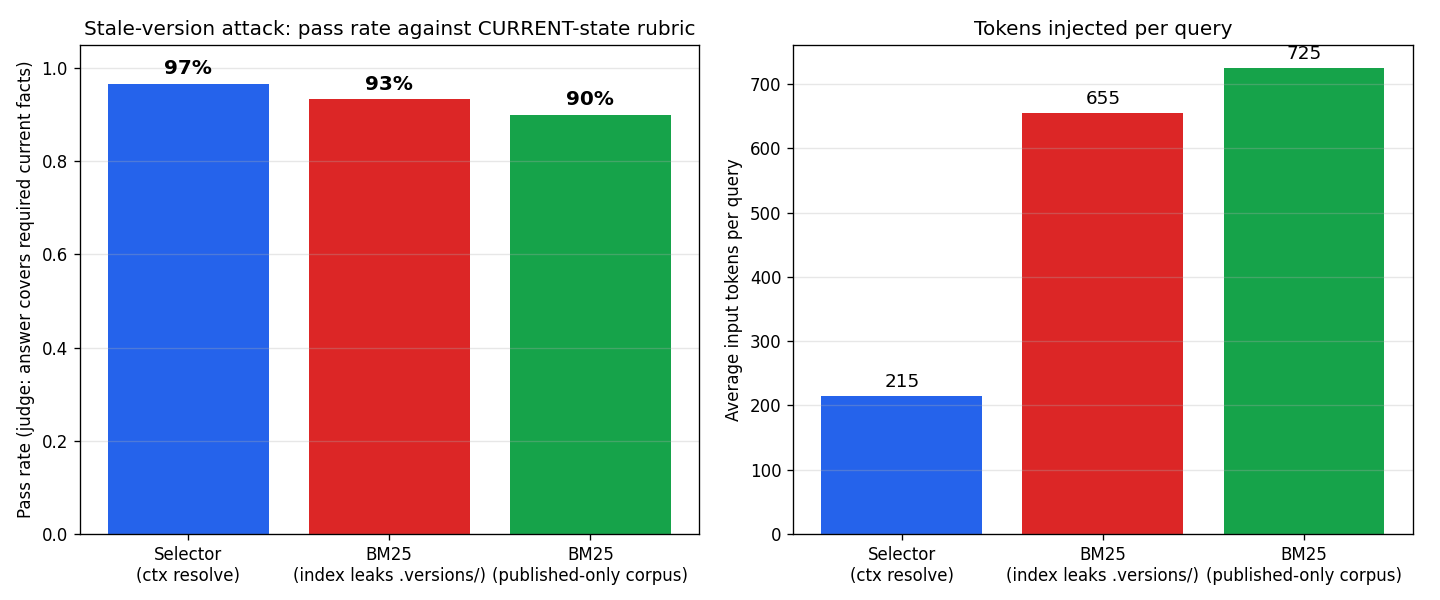}
\caption{Stale-version attack results. Left: pass rate against the current-state rubric. Right: average input tokens injected per query. Selector strictly Pareto-dominates both BM25 conditions on both axes.}
\label{fig:stale-attack}
\end{figure}

\subsubsection*{Two distinct failure modes for BM25}

The 1--3 query gap between the selector and the BM25 conditions is attributable to two qualitatively distinct failure modes, both demonstrated in this run.

\paragraph{Failure mode 1: stale-version poisoning.}
Query~s19 asks ``What error format do APIs use?'' The current published version of the API design standard specifies RFC~7807 problem-details~\citep{rfc7807}. The archived version specifies a custom \texttt{code}/\texttt{message} schema. The selector retrieves only the current version and produces an answer citing RFC~7807, marked PASS. \textbf{BM25 leaky retrieves the archived version} (along with two unrelated documents) and produces a confidently-stated answer enumerating the custom \texttt{code}/\texttt{message} schema---the superseded answer---marked FAIL. This is the failure mode the present paper argues most strongly against: an AI system reporting a confident, plausible answer that is grounded in evidence the organization has already retired.

\paragraph{Failure mode 2: retrieval miss.}
Query~s11 asks ``What defines a SEV1 incident?'' The selector tag predicate \texttt{\#runbook \#incident} returns the incident-response runbook by construction, and the model answers correctly. Both BM25 conditions instead surface the gRPC ADR, the onboarding guide, and the coding-conventions standard---none of which contain the requested information---and the model correctly reports that the context lacks the answer. This failure mode is independent of the version-leakage problem: it is a structural property of similarity-based retrieval against documents whose lexical overlap with the query is low even when the topical fit is exact. The selector grammar resolves this case by design because the relationship between query and document is expressed as typed tags, not as inferred similarity.

\subsubsection*{Discussion}

The selector achieves a strictly Pareto-dominant point: higher pass rate at lower token cost than either BM25 condition. The result is consistent with the central thesis of this paper---that governed selection over a published-state vault produces measurably different agent behavior than similarity-based retrieval over the same underlying corpus---and adds an empirical lower bound on the magnitude of the effect for tag-and-type queries against a small, well-structured vault.

We caution against over-reading the headline numbers. The 30-query suite is intentionally adversarial: every query targets a fact where stale and current versions contradict. The vault is synthesized and homogeneous in style. Only sparse retrieval was tested; the dense embedding baseline of E1 is not yet evaluated. The judge is a single LLM and inter-rater calibration is not yet reported. We treat the result as a demonstration of the failure modes the specification is designed to prevent, not as a benchmark of relative quality across realistic enterprise workloads. The full E1 grid (three retrieval conditions $\times$ three values of $k$, on the 50-query suite of §\ref{sec:experiments}), the determinism experiment (E2), and the multi-hop QA evaluation (E6) remain in progress.

\subsection{Determinism of Retrieval}
\label{sec:determinism}

Property~4 of Definition~\ref{def:governance} asserts that selector evaluation is a pure function of the selector expression, the checkpoint number, and the immutable vault history. This subsection reports a controlled measurement of that property against two retrieval baselines whose determinism is not guaranteed by their abstraction. The experiment corresponds to E2 in the program of §\ref{sec:experiments}.

\subsubsection*{Setup}

We synthesized a $1{,}060$-document corpus by replicating ten document templates (deploy-rollback runbooks, database-migration runbooks, incident-response runbooks, API-design standards, security-review standards, internal-protocol ADRs, architecture overviews, onboarding guides, monitoring runbooks, disaster-recovery runbooks) across $106$ service variants (payments-service, auth-service, billing-service, search-service, $\dots$). Each document carries a service-specific tag (e.g.\ \texttt{\#payments-service}) and topic tags (e.g.\ \texttt{\#runbook \#deploy \#ops}). Threshold values, owners, tools, and SLAs were randomized across documents from a fixed RNG seed, yielding a realistically structured but lexically heterogeneous corpus.

We then authored a 50-query suite, each query probing a specific fact in a specific service's documentation (e.g.\ ``What error rate threshold triggers a deploy rollback for the payments-service?''). For each query, we executed each retrieval method 20~times and recorded the set of retrieved document identifiers in each rep. We computed the mean pairwise Jaccard similarity across the 20~retrieved sets per (query, method) pair. A query is \emph{perfectly deterministic} under a method when its mean Jaccard is~1.0; \emph{non-deterministic} when $<\!1.0$.

The three retrieval methods compared were:
\begin{itemize}[nosep]
\item \textbf{Selector} (\texttt{ctx resolve}): tag-and-type predicate evaluation. The selector for query $q$ combines the service-specific and topic tags, returning the unique published document that matches (e.g.\ \texttt{\#runbook \#deploy \#payments-service}).
\item \textbf{BM25}: sparse top-$k$ retrieval ($k{=}3$) over the same corpus.
\item \textbf{Dense + HNSW}: top-$k$ retrieval ($k{=}3$) over \texttt{bge-small-en-v1.5} embeddings indexed with FAISS HNSW (\texttt{M=16}, \texttt{efConstruction=40}, \texttt{efSearch=4}). To stress two realistic sources of production non-determinism, we additionally rebuilt the HNSW index every 5~reps with a shuffled insertion order (exposing the well-documented insertion-order sensitivity of HNSW) and rotated which of the resulting variant indices each rep used.
\end{itemize}

(An initial pilot of this experiment on the 22-document fixture of §\ref{sec:first-results} produced a 17\% non-determinism rate for the dense+HNSW baseline. We scaled the corpus to $1{,}060$ documents because the rate of HNSW-related non-determinism is known to grow with the size and density of the embedding space.)

\subsubsection*{Results}

Results are summarized in Table~\ref{tab:determinism} and visualized in Figure~\ref{fig:determinism}.

\begin{table}[h]
\centering
\begin{tabular}{lrrrr}
\toprule
Method & Mean Jaccard & Min Jaccard & Perfectly det.\ queries & Non-det.\ queries \\
\midrule
Selector (\texttt{ctx resolve}) & \textbf{1.000} & 1.000 & \textbf{50 / 50} & 0 \\
BM25 ($k{=}3$) & \textbf{1.000} & 1.000 & \textbf{50 / 50} & 0 \\
Dense + HNSW ($\mathrm{efSearch}{=}4$) & 0.611 & 0.210 & 10 / 50 & \textbf{40 / 50} \\
\bottomrule
\end{tabular}
\caption{Retrieval-determinism results on the $1{,}060$-document synthesized corpus, 50 queries $\times$ 20 reps per method. The selector and BM25 baselines were perfectly deterministic on every query. The dense + HNSW baseline was non-deterministic on $40$ of $50$ queries (80\%); on the worst-affected query, repeated identical queries returned retrieved-document sets that overlapped only $21.0\%$ on average across the 20~reps.}
\label{tab:determinism}
\end{table}

\begin{figure}[h]
\centering
\includegraphics[width=0.95\linewidth]{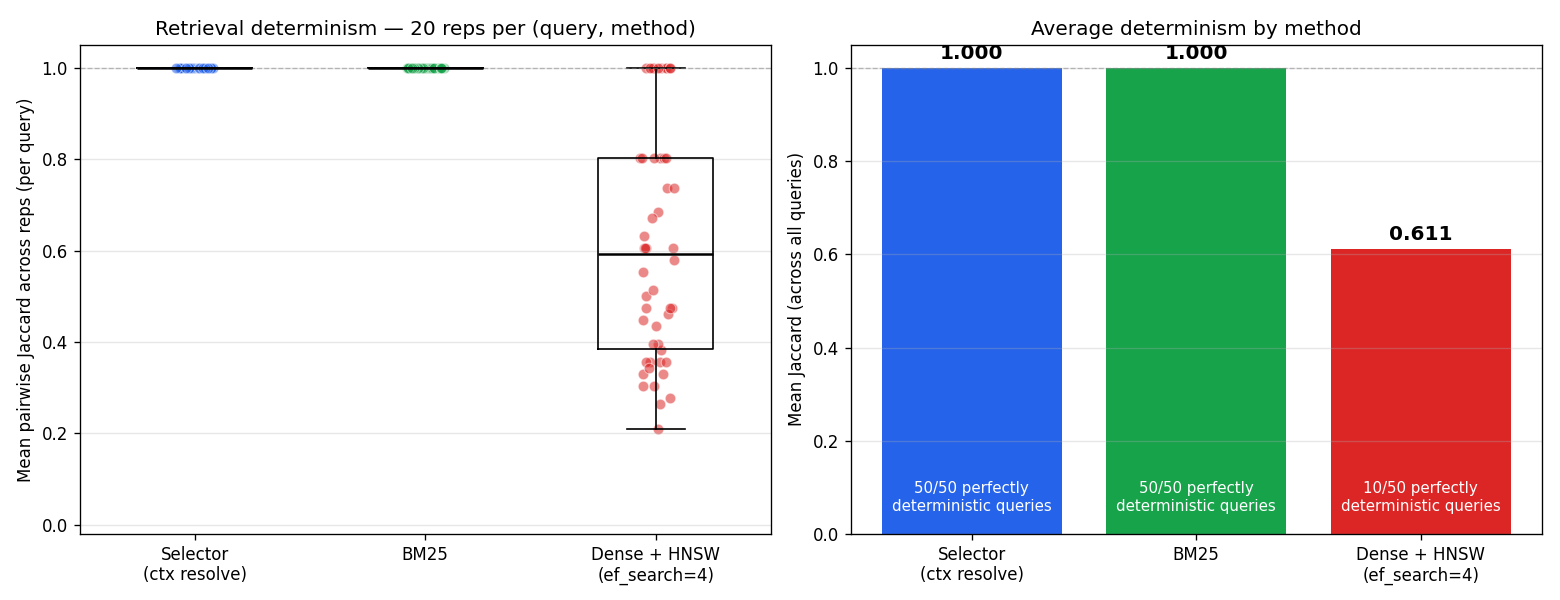}
\caption{Retrieval determinism across 50~queries with 20 repetitions per (query, method) pair on a $1{,}060$-document synthesized corpus. Left: per-query mean pairwise Jaccard; each dot is one query. Selector and BM25 yield a Jaccard of 1.0 on every query (all dots stacked at $y{=}1.0$). The dense+HNSW baseline's per-query Jaccard scores span the interval $[0.21, 1.0]$ with a median near $0.6$. Right: mean Jaccard across all queries, with the count of perfectly-deterministic queries annotated.}
\label{fig:determinism}
\end{figure}

\subsubsection*{Discussion}

The selector and BM25 baselines pass the determinism test trivially: both are deterministic algorithms applied to fixed inputs, and the test confirms this empirically on every query. The result of architectural interest is the dense + HNSW baseline. With realistic production parameters (low \texttt{efSearch}, insertion-order variance across index rebuilds) and a corpus large enough to exercise the embedding space, the dense baseline fails determinism on $80\%$ of queries. The mean Jaccard across queries is $0.611$, meaning that on an average query, repeated identical inputs return retrieved-document sets that share only $61\%$ of their members. On the worst-affected query, that overlap drops to $21\%$: nearly $4$ of every $5$ documents the baseline retrieves change from one identical-query execution to the next.

The relevance to context governance is direct. An agent that consumes context from a non-deterministic retrieval pipeline cannot reliably reproduce the inputs that justified a prior output. Audit, regulatory replay, incident investigation, and dispute resolution all require that the knowledge basis of an agent's action be reconstructible bit-identically after the fact. Property~4 (deterministic selection) is the structural precondition for that reconstructibility. A pipeline that fails the property on $80\%$ of its queries cannot supply traceability under any post-hoc inspection regime.

\paragraph{Limits of this measurement.}
The synthesized corpus is structured: 10 templates $\times$ 106 service variants. Production enterprise corpora are typically larger, less templated, and exhibit additional non-determinism factors not exercised here (GPU batching, sharded indices, distributed query coordination, periodic re-indexing). The corpus we used should be read as a controlled environment in which the structural property (selector determinism vs.\ dense non-determinism) becomes visible; production-scale magnitudes are expected to be at least as large. We treat this as a demonstration of the structural failure mode, not as a benchmark of relative quality. The full E2 grid of §\ref{sec:experiments} (100 queries $\times$ 100 reps $\times$ three retrieval conditions including hybrid sparse+dense) remains in progress.

\subsection{Artifact Availability}
\label{sec:artifacts}

The ContextNest specification, all three reference-implementation packages, the experimental harness, the fixture vault, and the rubrics used in §\ref{sec:first-results} and §\ref{sec:stale-attack} are openly available. Table~\ref{tab:artifacts} summarizes the artifact inventory; the canonical repository is hosted at \url{https://github.com/PromptOwl/context-nest}.

\begin{table}[h]
\centering
\small
\begin{tabular}{lll}
\toprule
Artifact & License & Path \\
\midrule
Specification & Apache-2.0 & \texttt{spec/} \\
\texttt{@promptowl/contextnest-engine} & AGPL-3.0 & \texttt{packages/engine/} \\
\texttt{@promptowl/contextnest-cli} & AGPL-3.0 & \texttt{packages/cli/} \\
\texttt{@promptowl/contextnest-mcp-server} & AGPL-3.0 & \texttt{packages/mcp-server/} \\
Experimental harness (Dockerized) & Apache-2.0 & \texttt{contextnest-eval/} \\
Fixture vault (11 published documents) & CC-BY-4.0 & \texttt{contextnest-eval/vaults/} \\
Query suites (E1, stale-attack) & CC-BY-4.0 & \texttt{contextnest-eval/queries*.yaml} \\
Stale-version archive content & CC-BY-4.0 & \texttt{contextnest-eval/vaults/nodes/**/.versions/} \\
\bottomrule
\end{tabular}
\caption{Artifact inventory.}
\label{tab:artifacts}
\end{table}

\paragraph{Reproducing the empirical results.}
The 10-query first run (§\ref{sec:first-results}) and the 30-query stale-version attack (§\ref{sec:stale-attack}) are reproduced end-to-end with three commands against the release tag above:

\begin{lstlisting}
git clone https://github.com/PromptOwl/contextnest-eval.git
cd contextnest-eval && make build
make run                              # E1 first run
QUERIES_FILE=/workspace/queries-stale.yaml \
  OUTPUT_PREFIX=stale-attack make run # stale-version attack
\end{lstlisting}

The harness reads \texttt{ANTHROPIC\_API\_KEY} from \texttt{.env} and writes per-query results (CSV) and the headline charts (PNG) to \texttt{outputs/}. Selector retrieval is executed by invoking the published \texttt{ctx resolve} command against the fixture vault, so any conforming implementation of the specification can be substituted for the reference implementation by overriding the \texttt{ctx} binary on \texttt{PATH}.

\paragraph{Verification commands.}
The integrity-verification claims of §\ref{sec:integrity} are exercised by:

\begin{lstlisting}
ctx verify                  # full vault chain-hash verification
ctx history nodes/<id>      # per-document version chain with hashes
ctx checkpoint list         # the append-only checkpoint log
\end{lstlisting}

\section{Discussion}
\label{sec:discussion}

\subsection{Limitations}
\label{sec:limitations}

\textbf{Access control is layered, not monolithic.} ContextNest separates the \emph{publication gate} (status, §\ref{sec:document}) from the \emph{authorization layer} (stewardship, §\ref{sec:stewardship}). The status mechanism is part of the format specification and is enforced by every conforming reader. Stewardship---role assignment, scope resolution, and separation-of-duties---is part of the platform layer and is implemented by the reference server but is not strictly required of every implementation. A minimal client that reads the vault directly from disk sees only the published-vs-draft distinction; richer clients enforce the full role lattice. We view this as an appropriate division: the format-level guarantee is universal, the platform-level guarantee is opt-in, and a vault is portable across both.

\textbf{Local-first architecture.} ContextNest vaults are directory-based and designed for local-first usage. Multi-user collaboration, real-time editing, and conflict resolution require a platform layer not defined by the specification.

\textbf{Author attribution is at the document level.} The frontmatter \texttt{author} field of §\ref{sec:document} records a single principal per version: the primary maintainer responsible for that version of the document. The chain hash of §\ref{sec:integrity} binds this attribution to the version's content, integrity-protecting it against rewrite. The model does not, however, support \emph{line-level} or \emph{block-level} attribution---the analogue of \texttt{git blame}---in v1 of the specification. Practical knowledge documents are often co-authored at the paragraph or section level, and a future revision of the specification is expected to extend the attribution model to a piecewise form (attribution records bound to anchor ranges in the document body, integrity-protected as part of the same chain hash). The interim convention is that the frontmatter \texttt{author} is the principal of record for the version as a whole; secondary contributors and their contributions are captured in the document body (acknowledgments, change-log section, or inline byline) at the author's discretion. This is a deliberate scope boundary for v1, not an oversight.

\textbf{Semantic retrieval is delegated, not built in.} The selector grammar (§\ref{sec:selectors}) and the direct/set addressing classes of §\ref{sec:resolution-classes} cover deterministic structural selection. Similarity-based and embedding-based retrieval are addressed through the delegated-retrieval surface of §\ref{sec:resolution-classes} (\texttt{contextnest://search/\{query\}}), whose resolver is plugged in via a configured back-end---a RAG pipeline, a hybrid sparse+dense index, or any conforming retriever. The specification does not include a built-in semantic retriever; the design choice (rather than the absence) is to keep the governance layer independent of any particular retrieval algorithm so that the two compose cleanly (§\ref{sec:complementary}). For users who require similarity search without configuring a separate back-end, the appropriate composition with an off-the-shelf RAG layer is straightforward and documented in §\ref{sec:complementary}.

\textbf{Hash chain verification is retrospective.} The integrity mechanism detects tampering after the fact but does not prevent it. An actor with write access to the vault can modify both content and hashes. The hash chain provides evidence of tampering (broken chains), not prevention.

\textbf{Agent identity is out of scope.} The current specification governs the \emph{knowledge} an agent consumes but does not govern \emph{which agent} is consuming it. Cryptographic agent identity, owner attestation, and revocation---requirements that are foundational in cross-organizational agentic commerce~\citep{konsynski2026trustworthy, mastercard2025agentpay, ap22025}---are deferred to a companion specification. The audit trace of §\ref{sec:injection} records the principal making each request but does not itself attest to the principal's identity outside the local trust domain.

\subsection{Comparison with Existing Approaches}

\begin{table}[h]
\centering
\small
\begin{tabular}{@{}lcccc@{}}
\toprule
\textbf{Property} & \textbf{RAG} & \textbf{KGs} & \textbf{Git} & \textbf{ContextNest} \\
\midrule
Provenance            & No  & Partial & Yes & Yes \\
Version identity      & No  & No      & Yes & Yes \\
Integrity             & No  & No      & Yes & Yes \\
Deterministic select. & No  & Yes     & N/A & Yes \\
Traceability          & No  & No      & No  & Yes \\
Temporal consistency  & No  & No      & Yes & Yes \\
Semantic retrieval    & Yes & Yes     & No  & No  \\
Knowledge preserved   & No  & No      & Yes & Yes \\
\bottomrule
\end{tabular}
\caption{Comparison of context governance properties across approaches. KGs = Knowledge Graphs. RAG includes both sparse and dense retrieval pipelines.}
\label{tab:comparison}
\end{table}

\subsection{Experimental Program}
\label{sec:experiments}

This draft argues structurally for context governance and reports first empirical results in §\ref{sec:first-results}--§\ref{sec:stale-attack}. The remaining experiments below are scheduled in priority order, each defending a specific claim made in the body. Tier-1 experiments (E1--E3) address the central claims and are partially complete; Tier-2 (E4--E6) demand more setup but are needed for comparison against established baselines; Tier-3 (E7--E9) characterize the systems-level performance of the reference implementation.

\paragraph{E1. Token cost: governed selection vs.\ retrieval-augmented baselines.} \textbf{Status: partial} (§\ref{sec:first-results}, §\ref{sec:stale-attack}). \emph{Claim defended:} governed selection yields competitive answer quality at lower input-token cost than top-$k$ retrieval, because the selector grammar surfaces exactly the relevant documents rather than over-retrieving by similarity. The full E1 grid (three retrieval conditions including dense embeddings (e.g., \texttt{bge-small-en-v1.5}), $k \in \{3, 5, 10\}$, on the 50-query suite, with inter-judge agreement reported) remains in progress. \emph{Headline output:} a Pareto curve of tokens-injected vs.\ answer quality.

\paragraph{E2. Determinism of retrieval.} \textbf{Status: partial} (§\ref{sec:determinism}). \emph{Claim defended:} Property~4 (deterministic selection). Selectors return identical results for identical queries; embedding-based retrieval does not. A 50-query run against a $1{,}060$-document synthesized corpus (with 20~reps per query per method) is reported in §\ref{sec:determinism}: the selector and BM25 baselines are perfectly deterministic on every query; the dense+HNSW baseline fails determinism on $80\%$ of queries (mean Jaccard $0.611$, worst-case $0.210$). The full E2 grid (100~queries $\times$ 100~repetitions $\times$ three retrieval conditions including hybrid sparse+dense, on multiple corpora) remains scheduled.

\paragraph{E3. Tamper detection.} \textbf{Status: scheduled.} \emph{Claim defended:} Property~3 (integrity). The hash chain detects post-publication modification. \emph{Setup:} a published vault at checkpoint~$N$, attacked under six patterns: (a) silent content edit, (b) author rewrite, (c) timestamp backdating, (d) version reordering, (e) checkpoint forgery, (f) collusive multi-document edit consistent across two related histories. \emph{Metrics:} detection rate of \texttt{ctx verify} for each pattern; granularity of the detection signal; time-to-detect on vaults of varying size. \emph{Expected:} 100\% detection for (a)--(e) by construction.

\paragraph{E4. Faithfulness under content drift.} \textbf{Status: scheduled.} \emph{Claim defended:} governed context preserves answer correctness as the underlying corpus evolves; ungoverned RAG drifts. \emph{Setup:} a governed vault and an embedding RAG index, both seeded with the same documents. Over 30~simulated days, $\sim$10\% of documents per day are edited. Each day, the same 20~questions are posed; each question's correct answer changes at some point. The RAG index re-embeds on a configurable cadence (immediate, hourly, daily). \emph{Metrics:} fraction of agent answers reflecting the currently-approved version vs.\ a stale version; auditability score.

\paragraph{E5. Adversarial poisoning.} \textbf{Status: partial} (§\ref{sec:stale-attack} reports a sharpened variant focused on stale-version leakage). \emph{Claim defended:} governed injection plus integrity verification protects against retrieval-time misinformation introduced after publication. The full E5 (six attack patterns from E3, three retrieval conditions including dense embedding, verify-before-inject pipeline) remains scheduled.

\paragraph{E6. Multi-hop QA on a public benchmark.} \textbf{Status: scheduled.} HotpotQA (distractor split) or 2WikiMultiHopQA, with selectors competing against dense retrieval on the same generator. \emph{Metrics:} exact-match, F1, faithfulness (RAGAS), tokens injected, cost per correct answer.

\paragraph{E7. Vault scaling.} \textbf{Status: scheduled.} Synthesized vaults at $10^2$ through $10^6$ nodes; p50/p99 latencies for selector evaluation, checkpoint reconstruction, and full-vault verify.

\paragraph{E8. Storage efficiency of keyframe-plus-diff.} \textbf{Status: scheduled.} Compare full-snapshot, Git loose-object, and keyframe-plus-diff at varying~$k$ on a synthesized 10-year edit history.

\paragraph{E9. Federation overhead.} \textbf{Status: scheduled.} Two- and ten-vault federations measured for resolution latency and cache hit rate against a flattened single-vault baseline.

\paragraph{E10. Small-model uplift under governed selection.} \textbf{Status: scheduled.} The task suite from E1 (50~queries), evaluated under the cross product of two retrieval conditions (selector vs.\ dense top-$k$) and three model conditions (small, mid-tier, frontier from the same family). 300 evaluations total. \emph{Headline metric:} the small-model uplift $\Delta_{\mathrm{small}} = Q_{\mathrm{small}}(\mathrm{selector}) - Q_{\mathrm{small}}(\mathrm{RAG})$ compared against $\Delta_{\mathrm{large}}$. Hypothesis: $\Delta_{\mathrm{small}} > \Delta_{\mathrm{large}}$.

\paragraph{E11. Coding benchmarks on real codebases.} \textbf{Status: scheduled.} SWE-bench Verified with three conditions: whole-repo dump, dense RAG over chunked source, and selector queries that exploit \texttt{type:code}, tag filters, and URI traversal. Variant E11b uses an agentic retrieval back-end choice.

\paragraph{Reproducibility.}
The experimental harness, fixture vaults, attack scripts, and evaluation rubrics are openly released at the project repository (§\ref{sec:artifacts}). Third parties can reproduce the headline numbers and apply the rubric to their own retrieval mechanisms.

\subsection{Other Future Work}
\label{sec:future-work}

Beyond the experimental program, several directions extend the specification itself.

\textbf{Agent identity and cryptographic attestation.} A companion specification for agent identity, owner attestation, and revocation---adjacent to the audit-trace and evidence-bundle structure of §\ref{sec:injection}---would close the gap identified in §\ref{sec:limitations}. The cross-organizational identity primitives developed for agentic commerce~\citep{ap22025, mastercard2025agentpay} are natural points of integration. We treat this as the highest-priority extension.

\textbf{Executable governed nodes.} ContextNest currently governs the \emph{knowledge} an agent reads. A natural extension is to govern the \emph{actions} an agent takes---promoting markdown nodes that describe agent skills (with tool grants, allowed hosts, and execution constraints) into first-class governed artifacts under the same approval, versioning, and integrity machinery.

\textbf{Federation protocol.} While the specification defines namespace federation semantics, the inter-vault resolution protocol (registry discovery, authentication, caching) requires further specification.

\textbf{Piecewise author attribution.} As noted in §\ref{sec:limitations}, v1 of the specification carries a single \texttt{author} per version in frontmatter. A v2 extension is expected to support piecewise attribution---records bound to anchor ranges in the document body (the same anchor space used for sub-document selection in §\ref{sec:chunking}), integrity-protected as part of the existing chain-hash construction (§\ref{sec:integrity}). The intended user experience is a \texttt{ctx blame} command analogous to \texttt{git blame}, returning the responsible principal and approval record for a specified anchor range or line span. The audit trace would then identify not just which version informed an agent's output, but which co-author's contribution within that version.

\textbf{Staged source-node lifecycle: implementation.} The staged source-node lifecycle state is now specified in §\ref{sec:staged-state}: a third lifecycle position between session-scoped hydration and durable publication that closes the audit-trail gap for ephemeral external content. The pending work is its reference implementation---the per-source-node staging chain, the session-scoped selector visibility, and the steward-driven promotion and garbage-collection paths---together with empirical validation of the retention/audit-completeness trade against live source types of differing volatility.

\textbf{Certification and continuous compliance.} The audit-evidence structure of §\ref{sec:injection} suggests a natural integration with PCI~DSS, SOC~2, and ISO/IEC~42001~\citep{iso42001} certification programs for agents and orchestration platforms. The audit log's append-only structure (§\ref{sec:integrity}) and the standard telemetry conventions referenced via OpenTelemetry~\citep{opentelemetry} provide the substrate; the certification framing is left to future work.

\textbf{Integration with training data governance.} ContextNest addresses inference-time knowledge; extending the provenance model to cover training data---connecting to datasheets \citep{gebru2021datasheets} and model cards \citep{mitchell2019model}---would provide end-to-end AI knowledge governance.

\section{Conclusion}

We have presented ContextNest, an open specification for structured, versioned, and verifiable knowledge governance for AI agents. By treating context as a first-class governed artifact---with typed documents, deterministic selection, cryptographic integrity, temporal checkpoints, and injection tracing---ContextNest addresses the context governance gap (CGG) that current RAG architectures leave open. First empirical results (§\ref{sec:first-results}--§\ref{sec:stale-attack}) demonstrate that governed selection strictly Pareto-dominates BM25 sparse retrieval on a controlled stale-version attack, achieving higher answer-quality pass rate at approximately one-third the input-token cost; two distinct BM25 failure modes (stale-version poisoning and retrieval miss) are demonstrated in the same suite.

The specification is intentionally layered: at its simplest, a ContextNest vault is a directory of Markdown files with YAML frontmatter, editable in any text editor. At its most capable, it provides hash-chained versioning, point-in-time graph reconstruction, federated cross-vault references, and complete audit trails of AI knowledge consumption.

As AI agents take increasingly autonomous actions in enterprise environments, the governance of the knowledge they consume transitions from a quality concern to a safety and compliance requirement. Retrieval is not governance. ContextNest provides the governance substrate beneath retrieval. Trustworthy autonomy will not come from better models or better tools alone; it will come from those advances composed with a verifiable knowledge supply chain. Context governance is the missing control plane for agentic systems.

The architectural frameworks that make this work possible were not invented in the 2020s; they were mapped out, tested, and debated within the Information Systems discipline more than three decades ago~\citep{elofson1991delegation, fjeldstad1986apportionment, sviokla1994reapportionment, konsynski2024cognitive}. Realizing the full potential of agentic AI requires looking past the capability frontier of the underlying models and finally building the artifact-level infrastructure that thirty years of theory has been calling for. ContextNest is one component of that infrastructure: a governed knowledge substrate that makes the principle of progressive, accountable cognitive offload---from delegation through apportionment to reapportionment of judgment---enforceable at the level of the artifacts the agents actually consume.

\section*{Author Contributions\protect\footnote{Authorship of this paper reflects scholarly contribution to the work described and does not constitute or imply any assignment, transfer, or determination of intellectual-property ownership or patent inventorship. The ContextNest specification and reference implementation are the property of PromptOwl,~LLC.}}
\label{sec:contributions}

\textbf{M.~Sulpovar} conceived ContextNest, designed the specification (including the document model, stewardship layer, selector grammar, the addressable URI scheme [co-designed with Q.~Kanchwala], hash-chain integrity construction, checkpoint mechanism, audit-trace schema, and source-node model), authored and maintains the three reference-implementation packages (engine, CLI, and MCP server), designed and ran the experimental program reported in §\ref{sec:first-results}--§\ref{sec:determinism} (including the synthesis of the $1{,}060$-document corpus used in §\ref{sec:determinism}), and produced the original draft of this paper. \textbf{B.~R.~Konsynski} contributed the revised introduction (§\ref{sec:intro}) including its thesis sentence, the ``retrieval is not governance'' formulation, the promotion of \emph{context governance gap} (CGG) to a defined and abbreviated term throughout the paper, the four-plane decomposition through which §\ref{sec:implementation} is organized (drawn from independent work on trustworthy agentic systems), the arXiv-path repositioning strategy, the Information Systems lineage that grounds §\ref{sec:intro} (Intellectual lineage paragraph), §\ref{sec:complementary}, §\ref{sec:related} (Trustworthy Agentic Systems subsection), §\ref{sec:governance-modes}, and the conclusion---the cumulative framework from delegation technologies through cognitive apportionment to cognitive reapportionment, drawn from his own four-decade research program---and substantive review of the v3 and v3.1 drafts. \textbf{Q.~Kanchwala} contributed to the reference implementation of the governed-context machinery: applying established hash-chaining and content-addressing techniques (cf.~\citep{merkle1987digital, torvalds2005git}) to realize the version-history and checkpoint constructions of §\ref{sec:integrity}--§\ref{sec:checkpoints}; co-designing the addressable \texttt{contextnest://} URI scheme (§\ref{sec:uri}) and implementing its resolution and canonicalization; and contributing to the selector syntax through paired development with the first author and to validation of the reference implementation; he reviewed the final manuscript. \textbf{G.~Goodhart} contributed the framing realignment of v6 (the ``governance frame beneath retrieval, not RAG replacement'' positioning in the abstract and §\ref{sec:intro}, the architectural-overview foreshadow in §\ref{sec:architecture-overview}, the reordering of the stewardship subsections in §\ref{sec:stewardship} so the ungoverned default is established before the governed-mode enforcement, the delegated-retrieval framing of \texttt{contextnest://search/\{query\}} in §\ref{sec:resolution-classes}, and the selector-anchored chunking pattern in §\ref{sec:chunking}); authored the specification of the staged source-node lifecycle state (§\ref{sec:staged-state}), including its session-scoped selector visibility; and contributed substantive review of the v5 draft. All authors approved the final manuscript.

\section*{Disclosure}
\label{sec:disclosure}

The ContextNest specification, the three reference-implementation packages (\texttt{@promptowl/contextnest-engine}, \texttt{@promptowl/contextnest-cli}, \texttt{@promptowl/contextnest-mcp-server}), the experimental harness, the fixture vaults, and the query suites described in this paper are the intellectual property of PromptOwl, LLC, released under the open-source licenses indicated in §\ref{sec:implementation} and §\ref{sec:artifacts}. The first author (M.~Sulpovar) has a financial interest in PromptOwl, LLC, which also develops additional software that consumes ContextNest vaults at runtime; that software is out of scope for this paper. The second author (B.~R.~Konsynski) has no financial interest in PromptOwl, LLC. The third author (Q.~Kanchwala) participates in this work in his individual capacity, on personal time, outside the scope of his employment; he has no financial interest in PromptOwl, LLC, and the work is not undertaken as work-for-hire for his employer. The fourth author (G.~Goodhart) likewise participates in this work in his individual capacity, on personal time, outside the scope of his employment; he has no financial interest in PromptOwl, LLC, and the work is not undertaken as work-for-hire for his employer.

\bibliographystyle{plainnat}

\end{document}